\useunder{\uline}{\ul}{}
\def\eqref#1{equation~\ref{#1}}
\def\Eqref#1{Eq.~\ref{#1}}
\def\1{\bm{1}}
\def\rva{{\mathbf{a}}}
\def\rvb{{\mathbf{b}}}
\def\rvo{{\mathbf{o}}}
\def\rvs{{\mathbf{s}}}
\def\mW{{\bm{W}}}
\DeclareMathAlphabet{\mathsfit}{\encodingdefault}{\sfdefault}{m}{sl}
\SetMathAlphabet{\mathsfit}{bold}{\encodingdefault}{\sfdefault}{bx}{n}
\def\gA{{\mathcal{A}}}
\def\gE{{\mathcal{E}}}
\def\gH{{\mathcal{H}}}
\def\gL{{\mathcal{L}}}
\def\gN{{\mathcal{N}}}
\def\gO{{\mathcal{O}}}
\def\gS{{\mathcal{S}}}
\def\sA{{\mathbb{A}}}
\def\sI{{\mathbb{I}}}
\def\sO{{\mathbb{O}}}
\def\sR{{\mathbb{R}}}
\def\sS{{\mathbb{S}}}
\newcommand{\E}{\mathbb{E}}
\newcommand{\R}{\mathbb{R}}
\newcommand{\KL}{D_{\mathrm{KL}}}
\newcommand{\x}{\times}
\newcommand{\cA}{\mathcal{A}}
\newcommand{\cO}{\mathcal{O}}
\DeclareMathOperator*{\argmax}{arg\,max}
\renewcommand{\citename}{\citet}
\renewcommand{\cite}{\citep}
\definecolor{myLinkColor}{rgb}{0.18,0.39,0.62}
\newtheorem{thm}{Theorem}
\newcommand{\appref}[1]{\hyperref[#1]{Appendix~\ref*{#1}}}
\newcommand{\BlackBox}{\rule{1.5ex}{1.5ex}}  
    \renewenvironment{proof}{\par\noindent{\bf Proof\ }}{\hfill\BlackBox\\[2mm]}
    \newenvironment{proof}{\par\noindent{\bf Proof\ }}{\hfill\BlackBox\\[2mm]}
\newtheorem{theorem}{Theorem}
\newtheorem{proposition}[theorem]{Proposition}
\newtheorem{remark}[theorem]{Remark}
\newtheorem{definition}[theorem]{Definition}
\newcommand\Dist{\Delta}
\newtheorem{claim}[theorem]{Claim}
\title{Learning Temporal Abstractions via Variational
  Homomorphisms in Option-Induced Abstract MDPs}
\author{%
  Chang Li$^\textbf{1}$, Yaren Zhang$^\textbf{2}$, Haoran Lv$^\textbf{3}$\thanks{The work does not relate to the author's position at Amazon.}, Qiong Cao$^\textbf{1}$, Chao Xue$^\textbf{1}$, Xiaodong He$^\textbf{1}$ \\
  $^\textbf{1}$JD Joy Future Academy, China \\ \texttt{\{lichang93, caoqiong1, xuechao19, xiaodong.he\}@jd.com}\\
  $^\textbf{2}$ Carleton University, Canada \\ \texttt{\{yarenzhang\}@cmail.carleton.ca} \\
  $^\textbf{3}$Amazon Web Services, China \\ \texttt{\{lvhaoran\}@amazon.com} \\
}
\begin{document}

\maketitle

\begin{abstract}
  Large Language Models (LLMs) have shown remarkable reasoning
  ability through explicit Chain-of-Thought (CoT) prompting, but
  generating these step-by-step textual explanations is
  computationally expensive and slow. To overcome this, we aim to
  develop a framework for efficient, implicit reasoning, where
  the model ``thinks'' in a latent space without generating
  explicit text for every step. We propose that these latent
  thoughts can be modeled as temporally-extended abstract
  actions, or ``options,'' within a hierarchical reinforcement
  learning framework. To effectively learn a diverse library of
  options as latent embeddings, we first introduce the
  Variational Markovian Option Critic (VMOC), an off-policy
  algorithm that uses variational inference within the HiT-MDP
  framework. To then provide a rigorous foundation for using
  these options as an abstract reasoning space, we extend the
  theory of continuous MDP homomorphisms. This proves that
  learning a policy in the simplified, abstract latent space—for
  which VMOC is suited—preserves the optimality of the solution
  to the original, complex problem. Finally, we propose a
  cold-start procedure that leverages supervised fine-tuning
  (SFT) data to distill human reasoning demonstrations into this
  latent option space, providing a rich initialization for the
  model's reasoning capabilities. Extensive experiments
  demonstrate that our approach achieves strong performance on
  complex logical reasoning benchmarks and challenging locomotion
  tasks, validating our framework as a principled method for
  learning abstract skills for both language and control.
\end{abstract}

\section{Introduction}

Recent advancements in deep reinforcement learning (DRL) have
demonstrated significant successes across a variety of complex
domains, such as mastering the human level of
atari~\cite{mnih2015human} and Go~\cite{silver2016mastering}
games. These achievements underscore the potential of combining
reinforcement learning (RL) with powerful function approximators
like neural networks~\cite{bertsekas1996neuro} to tackle
intricate tasks that require nuanced control over extended
periods. Despite these breakthroughs, Deep RL still faces
substantial challenges, such as insufficient exploration in
dynamic
environments~\cite{haarnoja2017reinforcement,eysenbach2018diversity,sharma2019dynamics},
inefficient learning associated with temporally extended
actions~\cite{brockett1993hybrid,colombetti1996behavior} and long
horizon tasks~\cite{konidaris2009skill,bacon2017option}, and vast
amounts of samples required for training proficient
behaviors~\cite{guo2017using,schulman2017equivalence,goyal2019infobot}.

One promising area for addressing these challenges is the
utilization of hierarchical reinforcement learning
(HRL)~\cite{dayan1993feudal,araujo1996learning,dietterich2000hierarchical},
a diverse set of strategies that decompose complex tasks into
simpler, hierarchical structures for more manageable learning.
Among these strategies, the option
framework~\cite{sutton1999between}, developed on the Semi-Markov
Decision Process (SMDP), is particularly effective at segmenting
non-stationary task stages into temporally-extended actions known
as options. Options are typically learned through a maximum
likelihood approach that aims to maximize the expected rewards
across trajectories. In this framework, options act as temporally
abstracted actions executed over variable time steps, controlled
by a master policy that decides when each option should execute
and terminate. This structuring not only simplifies the
management of complex environments but also enables the
systematic discovery and execution of temporal abstractions over
long-horizon tasks~\cite{khetarpal2019learning,
  kamat2020diversity}.

However, the underlying SMDP framework is frequently undermined
by three key challenges: 1) Insufficient exploration and
degradation~\cite{harb2018waiting, osa2019hierarchical,
  kamat2020diversity}. As options are unevenly updated using
conventional maximum likelihood methods~\cite{bacon2017option,
  daniel2016probabilistic, smith2018inference,
  khetarpal2020options, klissarov2021flexible}, the policy is
quickly saturated with early rewarding observations. This
typically results in focusing on only low-entropy options that
lead to local optima rewards, causing a single option to either
dominate the entire policy or switch every timestep. Such
premature convergence limits option diversity significantly. 2)
Sample Inefficiency. The semi-Markovian nature inherently leads
to sample
inefficiency~\cite{sutton1999between,kolobov2012discovering}:
each policy update at the master level extends over multiple time
steps, thus consuming a considerable volume of experience samples
with relatively low informational gain. This inefficiency is
further exacerbated by the prevalence of on-policy option
learning algorithms~\cite{ bacon2017option, zhang2019dac}, which
require new samples to be collected simultaneously from both
high-level master policies and low-level action policies at each
gradient step, and thus sample expensive. 3) Computationally
expensive. Options are conventionally defined as
triples~\cite{bacon2017option} with intra-option policies and
termination functions, often modeled using neural networks which
are expensive to optimize. These challenges collectively limit
the broader adoption and effectiveness of the option framework in
real-world scenarios, particularly in complex continuous
environments where scalability and stability are
critical~\cite{fujimoto2018addressing,li2020skill,klissarov2021flexible}.

To address these challenges, we introduce the Variational
Markovian Option Critic (VMOC), a novel off-policy algorithm that
integrates the variational inference framework on option-induced
MDPs~\cite{li2022hit}. We first formulate the optimal
option-induced SMDP trajectory as a probabilistic inference
problem, presenting a theoretical convergence proof of the
variational distribution under the soft policy iteration
framework~\cite{haarnoja2018soft}. Similar to prior variational
methods~\cite{levine2018reinforcement}, policy entropy terms
naturally arise as intrinsic rewards during the inference
procedure. As a result, VMOC not only seeks high-reward options
but also maximizes entropy across the space, promoting extensive
exploration and maintaining high diversity. We implements this
inference procedure as an off-policy soft actor
critic~\cite{haarnoja2018soft} algorithm, which allows reusing
samples from replay buffer and enhances sample efficiency.
Furthermore, to address the computational inefficiencies
associated with conventional option triples, we
follow~\cite{li2022hit} and employ low-cost option embeddings
rather than complex neural network models. This not only
simplifies the training process but also enhances the
expressiveness of the model by allowing the agent to capture a
more diverse set of environmental dynamics.

To provide a rigorous theoretical foundation for learning in
abstract option spaces, we extend the theory of continuous MDP
homomorphisms~\cite{panangaden2024policy} to the continuous
HiT-MDP setting. MDP homomorphisms provide a formal framework for
state-action abstractions that preserve optimal value functions,
but previous work has been limited to standard continuous MDPs.
We introduce continuous HiT-MDP homomorphisms using the
mathematical framework of vector bundles to elegantly capture the
relationship between state-option pairs across different levels
of abstraction. This formulation allows us to prove that optimal
value equivalence and policy lifting properties extend to the
option framework, ensuring that learning in abstract spaces does
not sacrifice optimality.

Building on this theoretical foundation, we further establish
that the variational inference framework seamlessly integrates
with abstract HiT-MDPs. Specifically, we prove that maximizing
the evidence lower bound (ELBO) in an abstract HiT-MDP obtained
through a homomorphism is equivalent to optimizing a lower bound
of the ELBO in the original space. This result provides a
principled justification for using VMOC to learn policies in
abstract option spaces: the algorithm directly optimizes the
variational objective of the original problem while benefiting
from the computational advantages of working in a simplified
abstract space. The combination of HiT-MDP homomorphisms and
variational inference thus offers both theoretical guarantees and
practical benefits for hierarchical reinforcement learning.

Beyond traditional control tasks, the structure of VMOC offers a
compelling solution to challenges in other domains, such as the
efficiency of reasoning in Large Language Models (LLMs). While
Chain-of-Thought (CoT) prompting has enabled LLMs to tackle
complex multi-step problems, the generation of explicit reasoning
text incurs substantial computational and latency
costs~\cite{wei2022chain}. A promising alternative is to perform
reasoning implicitly within the model's latent
space~\cite{geiping2025scaling, ruan2025reasoning}, but this
often sacrifices the interpretability of the reasoning
process~\cite{geiping2025scaling}. Our framework naturally
addresses this trade-off. We posit that the latent option space
in VMOC can represent abstract reasoning steps, or an ``implicit
CoT''. To initialize this space, we propose a cold-start phase
that leverages supervised fine-tuning (SFT) datasets of explicit
reasoning demonstrations. Through a variational objective
analogous to \Eqref{eq:elbo}, we distill these explicit CoT
chains into the discrete latent option embeddings. This
pre-training endows the model with a rich library of reasoning
primitives that can be invoked for efficient, purely latent
inference. This two-stage approach opens the door to developing
agents that perform implicit reasoning in a structured latent
space, while retaining the ability to be refined via RL and
potentially be decoded into understandable language, bridging
efficient performance with verifiability.

Our contributions can be summarized as follows:
\begin{itemize}
\item We propose the Variational Markovian Option Critic (VMOC),
  an off-policy, maximum-entropy algorithm that learns a diverse
  set of options represented as low-cost embeddings, enhancing
  sample efficiency and exploration.
\item We provide a rigorous theoretical foundation by extending
  the framework of continuous MDP homomorphisms to HiT-MDPs,
  proving that learning in the abstract option space preserves
  optimality guarantees.
\item We introduce a novel application for VMOC in language
  models, proposing a cold-start supervised fine-tuning (SFT)
  procedure to learn an ``implicit Chain-of-Thought'' in the
  latent option space for efficient and effective reasoning.
\item We demonstrate through extensive experiments that VMOC
  significantly outperforms strong baselines in challenging
  Mujoco locomotion tasks and achieves competitive results on
  complex logical reasoning benchmarks, validating its broad
  applicability.
\end{itemize}

\section{Preliminary}
\label{sec:back}

\subsection{Control as Structured Variational Inference}
\label{sec:back_var}

Conventionally, the control as inference
framework~\cite{haarnoja2018soft, levine2018reinforcement,
  haarnoja2018soft,ziebart2010modeling} is derived using the
maximum entropy objective. In this section, we present an
alternative derivation from the perspective of structured
variational inference. We demonstrate that this approach provides
a more concise and intuitive pathway to the same theoretical
results, where the maximum entropy principle naturally emerges
through the direct application of variational inference
techniques.

Traditional control methods focus on directly maximizing rewards,
often resulting in suboptimal trade-offs between exploration and
exploitation. By reinterpreting the control problem as a
probabilistic inference problem, the control as inference
framework incorporates both the reward structure and
environmental uncertainty into decision-making, providing a more
robust and flexible approach to policy optimization. In this
framework, optimality is represented by a binary random variable
\(\gE\in \{0,1\}\)\footnote{Conventionally, the optimality
  variable is denoted by \(\mathcal{O}\). However, in this
  context, we use \(\gE\) to avoid conflict with notation used in
  the option framework.}. The probability of optimality given a
state-action pair \((\rvs, \rva)\) is denoted as \(P(\gE=1\mid
\rvs,\rva)=\exp(r(\rvs,\rva))\), which is an exponential function
of the conventional reward function \(r(\rvs,\rva)\) that
measures the desirability of an action in a specific state.
Focusing on \(\gE=1\) captures the occurrence of optimal events.
For simplicity, we will use \(\gE\) instead of \(\gE=1\) in the
following text to avoid cluttered notations. The joint
distribution over trajectories \(\tau = (\rvs_1, \rva_1, \ldots,
\rvs_T, \rva_T)\) given optimality is expressed as:
\[
  P(\tau|\gE_{1:T}) \propto P(\tau,\gE_{1:T}) = P(\rvs_1)
  \prod_{t=1}^{T-1} P(\rvs_{t+1}|\rvs_t,\rva_t)
  P(\gE_t|\rvs_t,\rva_t)
\]
where \(P(\rvs_1)\) is the initial state distribution,
\(P(\rvs_{t+1}|\rvs_t,\rva_t)\) is the dynamics model. As explained in~\cite{haarnoja2018soft,levine2018reinforcement},
direct optimization of \(P(\tau\mid \gE_{1:T})\) can result in an
optimistic policy that assumes a degree of control over the
dynamics. One way to correct this risk-seeking
behavior~\cite{levine2018reinforcement} is through structured
variational inference. In our case, the goal is to approximate
the optimal trajectory \(P(\tau)\) with the variational
distribution:
\[
q(\tau) = P(\rvs_1) \prod_{t=1}^{T-1} P(\rvs_{t+1}\mid \rvs_t,\rva_t) q(\rva_t\mid \rvs_t)
\]
where the initial distribution \(P(\rvs_1)\) and transition
distribution \(P(\rvs_{t+1}\mid \rvs_t,\rva_t)\) is set to be the
true environment dynamics from \(P(\tau)\). The only variational
term is the variational policy \(q(\rva_t\mid \rvs_t)\), which is
used to approximate the optimal policy \(P(\rva_t\mid
\rvs_t,\gE_{1:T})\). Under this setting, the environment dynamics
will be canceled out from the optimization objective between
\(P(\tau\mid \gE)\) and \(q(\tau)\), thus explicitly disallowing
the agent to influence its dynamics and correcting the
risk-seeking behavior.

With the variational distribution at hand, the conventional
maximum entropy framework can be recovered through a direct
application of standard structural variational
inference~\cite{koller2009probabilistic}:
\begin{align*}
  \log P(\gE_{1:T}) &= \gL(q(\tau), P(\tau,\gE_{1:T})) + \KL(q(\tau)\parallel P(\tau|\gE_{1:T}))\\
                    &= \underbrace{\E_{\tau \sim q(\tau)}[\sum_t^Tr(\rvs_t,\rva_t) + \gH( q(\cdot|\rvs_t) ) ]}_\text{maximum entropy objective} + \KL(q(\rva_t|\rvs_t)\parallel P(\rva_t|\rvs_t,\gE_{1:T}))
\end{align*}
where $\gL(q,P)=\E_{q}[\log \frac{P}{q}]$ is the Evidence Lower
Bound (ELBO)~\cite{koller2009probabilistic}. The maximum entropy
objective arises naturally as the environment dynamics in
$P(\tau,\gE)$ and $q(\tau)$ cancel out. Under this formulation,
the soft policy iteration theorem~\cite{haarnoja2018soft} has an
elegant Expectation-Maximization (EM)
algorithm~\cite{koller2009probabilistic} interpretation: the
E-step corresponds to the policy evaluation of the maximum
entropy objective $\mathcal{L}(q^{[k]}, P)$; while the M-step
corresponds to the policy improvement of the $\KL$ term
$q^{[k+1]}=\argmax_q\KL(q^{[k]}(\tau)\parallel P(\tau\mid\gE))$.
Thus, soft policy iteration is an exact inference if both EM
steps can be performed exactly.
\begin{thm}[Convergence Theorem for Soft Policy Iteration]
  \label{thm:var_soft_mdp}
  Let \(\tau\) be the latent variable and \(\gE\) be the observed
  variable. Define the variational distribution \(q(\tau)\) and
  the log-likelihood \(\log P(\gE)\). Let \( M: q^{[k]}
  \rightarrow q^{[k+1]} \) represent the mapping defined by the
  EM steps inference update, so that \( q^{[k+1]} = M(q^{[k]})
  \). The likelihood function increases at each iteration of the
  variational inference algorithm until convergence conditions
  are satisfied.
\begin{proof}
  See \appref{app:thm_var_soft_mdp}.
\end{proof}
\end{thm}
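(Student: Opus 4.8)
The plan is to run the textbook expectation--maximization ascent argument, specialized to the structured variational family fixed above; at the abstract level the only content is verifying that the update map $M$ is well defined and does not decrease the relevant objective, after which monotone convergence does the rest. The skeleton is the exact identity already displayed in the text: for \emph{any} admissible variational trajectory law $q(\tau)$,
\[
  \log P(\gE_{1:T}) \;=\; \gL\!\left(q(\tau),\, P(\tau,\gE_{1:T})\right) \;+\; \KL\!\left(q(\tau)\parallel P(\tau\mid\gE_{1:T})\right),
\]
together with $\KL(\cdot\parallel\cdot)\ge 0$. This shows that $\gL(\cdot,P)$ — the evidence lower bound, which in our case coincides with the maximum-entropy return-plus-entropy objective evaluated at $q$ — is a lower bound on the fixed log-evidence $\log P(\gE_{1:T})$, with equality iff $q$ equals the posterior $P(\tau\mid\gE_{1:T})$. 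First I would make this rigorous in the present setting: finiteness of $\log P(\gE_{1:T})$ under a boundedness assumption on $r$ (so $P(\gE_{1:T})\in(0,\infty)$), measurability and integrability of the backward messages and entropy terms so that the ELBO is well defined, and the observation that among admissible $q$ only the policy factors $q(\rva_t\mid\rvs_t)$ are free, the dynamics factors being pinned to the true environment dynamics.

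Next I would analyze the two half-steps composing $M$ in order. The E-step (policy evaluation) holds $q^{[k]}$ fixed and computes $\gL(q^{[k]},P)$ by solving the soft Bellman fixed-point equations for the value and $Q$ functions of $q^{[k]}$; it does not alter the policy, and in the exact case it realigns the bound so that $\log P(\gE_{1:T})=\gL(q^{[k]},P)$ holds with respect to the current messages. The M-step (policy improvement) returns $q^{[k+1]}=M(q^{[k]})$ by optimizing the ELBO over the free policy factors; since the $q$-dependence enters only through $q(\rva_t\mid\rvs_t)$ and the induced targets are of Boltzmann form $\propto \exp Q^{q^{[k]}}$, the optimizer is the closed-form Boltzmann policy $\propto\exp Q^{q^{[k]}}$ and one obtains $\gL(q^{[k+1]},P)\ge \gL(q^{[k]},P)$; this is precisely the soft policy improvement lemma of \citet{haarnoja2018soft}, which I would re-derive here (per-state KL-minimization defining the update, then a telescoping soft-Bellman inequality). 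Chaining the two inequalities over one full iteration gives the monotonicity $\gL(q^{[k+1]},P)\ge\gL(q^{[k]},P)$, i.e.\ the (variational) likelihood is nondecreasing along $q^{[0]},q^{[1]},\dots$. Since $\gL(\cdot,P)\le\log P(\gE_{1:T})<\infty$, the sequence is monotone and bounded above, hence convergent by the monotone convergence theorem; at any fixed point $M(q)=q$ the improvement inequality is an equality, which forces $q(\rva_t\mid\rvs_t)\propto\exp Q^{q}(\rvs_t,\rva_t)$ — the soft Bellman optimality condition — so the limit is the optimal maximum-entropy policy and the inference is exact when both half-steps are exact.

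I expect the main obstacle to be twofold. First, this is \emph{generalized} (variational) EM rather than textbook EM: because the variational family is restricted, the E-step cannot in general drive $\KL$ to zero, so the monotonicity must be argued directly as coordinate ascent on the ELBO rather than through the usual ``$\KL\ge0$ after the M-step'' shortcut, and the exactness claim applies only when both half-steps are carried out exactly. Second, and more delicate, is the policy-improvement step: one needs the maximizer over policies to be attained as a genuine probability kernel — finiteness of the partition function $\int\exp Q^{q^{[k]}}(\rvs,\rva)\,d\rva$ and its measurability in $\rvs$ — together with enough regularity on $r$ and the transition kernel for the soft Bellman operator to be a contraction, so that the E-step is itself well posed and returns the value functions the M-step consumes. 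Once these conditions are in place, the monotone-bounded-convergence conclusion is routine.
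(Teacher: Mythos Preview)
Your proposal is correct and follows the same skeleton as the paper's proof: both rest on the ELBO decomposition $\log P(\gE_{1:T})=\gL(q,P)+\KL(q\parallel P(\cdot\mid\gE_{1:T}))$, argue monotonicity of the ELBO under the EM update $M$, and conclude by bounded monotone convergence. The paper's appendix proof is considerably terser --- it essentially asserts the monotonicity and boundedness and stops --- whereas you flesh out the E-step/M-step mechanics via soft policy evaluation and the Boltzmann-form improvement lemma of \citet{haarnoja2018soft}, and you correctly flag that the restricted variational family makes this generalized (coordinate-ascent) EM rather than textbook EM; this extra care is useful but not a different argument.
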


\subsection{The Option Framework}
\label{sec:back_option}
In conventional SMDP-based Option
Framework~\cite{sutton1999between}, an option is a triple
$(\sI_o, \pi_o, \beta_o)\in \gO$, where $\gO$ denotes the option
set; $o\in\sO=\{1,2,\dots,K\}$ is a positive integer index which
denotes the $o$-th triple where $K$ is the number of options;
$\sI_o$ is an initiation set indicating where the option can be
initiated; $\pi_o=P_o(\rva|\rvs):\sA\times\sS\rightarrow[0,1]$ is
the action policy of the $o$th option;
$\beta_o=P_o(\rvb=1|\rvs):\sS\rightarrow[0,1]$ where $\rvb\in
\{0,1\}$ is a \emph{termination function}. For clarity, we use
$P_o(\rvb=1|\rvs)$ instead of $\beta_o$ which is widely used in
previous option literatures (e.g.,
\citet{sutton1999between,bacon2017option}). A \emph{master
  policy} $\pi(\rvo|\rvs)=P(\rvo|\rvs)$ where $\rvo\in\sO$ is
used to sample which option will be executed. Therefore, the
dynamics (stochastic process) of the option framework is written
as:
\begin{align}
  P(\tau) = P(\rvs_0,\rvo_0)&\prod_{t=1}^\infty P(\rvs_t|\rvs_{t-1},\rva_{t-1})P_{o_t}(\rva_{t}|\rvs_{t})\nonumber\\
                     &[P_{o_{t-1}}(\rvb_t=0|\rvs_t)\1_{\rvo_t=o_{t-1}} +P_{o_{t-1}}(\rvb_t=1|\rvs_t)P(\rvo_t|\rvs_t)],
                       \label{eq:smdp_pgm}
\end{align}
where $\tau=\{\rvs_0,\rvo_0,\rva_0,\rvs_1,\rvo_1,\rva_1,\ldots\}$
denotes the trajectory of the option framework. $\1$ is an
indicator function and is only true when $\rvo_t=o_{t-1}$ (notice
that $o_{t-1}$ is the realization at $\rvo_{t-1}$). Therefore,
under this formulation the option framework is defined as a
Semi-Markov process since the dependency on an activated option
$o$ can cross a variable amount of time \cite{sutton1999between}.
Due to the nature of SMDP assumption, conventional option
framework is unstable and computationally expensive to optimize.
Li et al.~\cite{li2020skill,li2022hit} proposed the Hidden
Temporal Markovian Decision Process (HiT-MDP):
\begin{equation}
  \label{eq:hit_mdp_pgm}
  P(\tau)=P(\rvs_0,\rvo_0)
  \prod_{t=1}^\infty P(\rvs_t|\rvs_{t-1},\rva_{t-1})
  P(\rva_{t}|\rvs_{t},\rvo_{t})P(\rvo_t|\rvs_t,\rvo_{t-1})
\end{equation}
and theoretically proved that the option-induced HiT-MDP is
homomorphically equivalent to the conventional SMDP-based option
framework. Following RL conventions, we use
$\pi^A=P(\rva_{t}|\rvs_{t},\rvo_{t})$ to denote the action policy
and $\pi^O=P(\rvo_t|\rvs_t,\rvo_{t-1})$ to denote the option
policy respectively. In HiT-MDPs, options can be viewed as latent
variables with a temporal structure
$P(\rvo_t|\rvs_t,\rvo_{t-1})$, enabling options to be represented
as dense latent embeddings rather than traditional option
triples. They demonstrated that learning options as embeddings on
HiT-MDPs offers significant advantages in performance,
scalability, and stability by reducing variance. However, their
work only derived an on-policy policy gradient algorithm for
learning options on HiT-MDPs. In this work, we extend their
approach to an off-policy algorithm under the variational
inference framework, enhancing exploration and sample efficiency.

\section{A Variational Approach for Learning HiT-MDP}
\label{sec:meth}

In this section, we introduce the Variational Markovian Option
Critic (VMOC) algorithm by extending the variational policy
iteration (Theorem~\ref{thm:var_soft_mdp}) to the option
framework. In Section~\ref{sec:meth_reform}, we reformulate the
optimal option trajectory and the variational distribution as
probabilistic graphical models (PGMs), propose the corresponding
variational objective, and present a provable exact inference
procedure for these objectives in tabular settings.
Section~\ref{sec:meth_vmoc} extends this result by introducing
VMOC, a practical off-policy option learning algorithm that uses
neural networks as function approximators and proves the
convergence of VMOC under approximate inference settings. Our
approach differs from previous
works~\cite{haarnoja2018soft,li2020soac,li2020skill} by
leveraging structured variational inference directly, providing a
more concise pathway to both theoretical results and practical
algorithms.

\subsection{PGM Formulations of The Option Framework}
\label{sec:meth_reform}

Formulating complex problems as probabilistic graphical models
(PGMs) offers a consistent and flexible framework for deriving
principled objectives, analyzing convergence, and devising
practical algorithms. In this section, we first formulate the
optimal trajectory of the conventional SMDP-based option
framework (\Eqref{eq:smdp_pgm}) as a PGM. We then use the
HiT-MDPs as the variational distribution to approximate this
optimal trajectory. With these PGMs, we can straightforwardly
derive the variational objective, where maximum entropy terms
arise naturally. This approach allows us to develop a stable
algorithm for learning diversified options and preventing
degeneracy.
\begin{figure*}[thb]
  \centering
  \includegraphics[width=1\linewidth]{./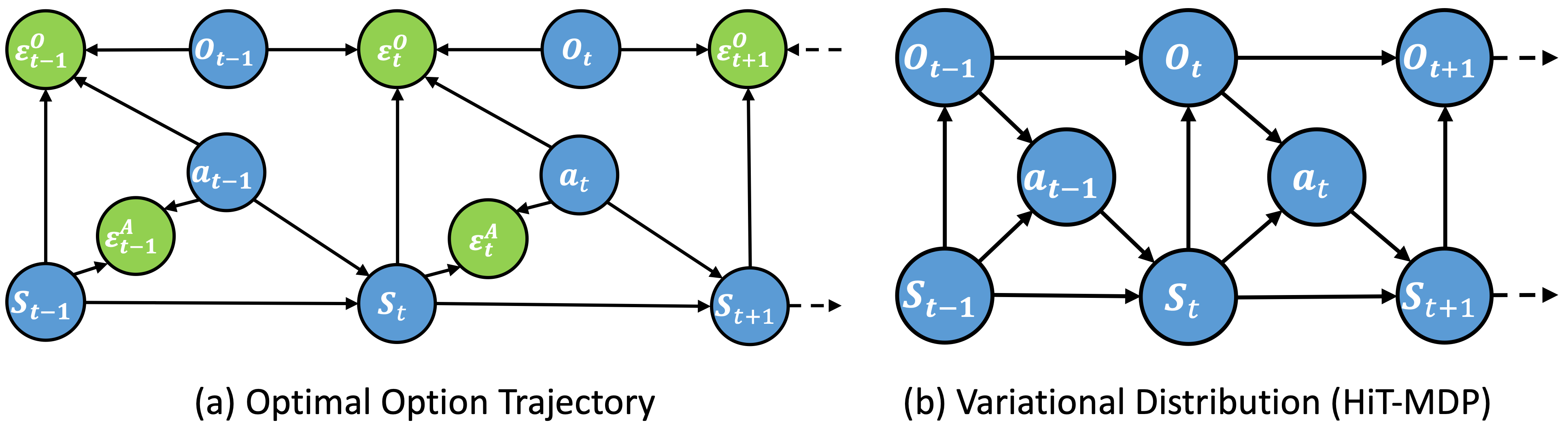}\\
  \caption{\label{fig:var_pgm} PGMs of the option framework.}
\end{figure*}
Specifically, we follow \cite{levine2018reinforcement,
  koller2009probabilistic} by introducing the concept of
"Optimality" \cite{todorov2006linearly} into the conventional
SMDP-based option framework (Equation~\eqref{eq:smdp_pgm}). This
allows us to define the probability of an option trajectory being
optimal as a probabilistic graphical model (PGM), as illustrated
in Figure~\ref{fig:var_pgm} (a):
\begin{align}
    \label{eq:maxent_pgm}
  \nonumber   P(\tau,\gE^A_{1:T},\gE^O_{1:T}&)=P({\rvs}_{0},\rvo_{0})\prod_{t=1}^{T} P({\rvs}_{t+1}|\rvs_{t},\rva_{t}) P(\gE^A_t=1|\rvs_t,\rva_t) P(\gE^O_t=1|\rvs_t,\rva_t,\rvo_t,\rvo_{t-1})P(\rvo_t)P(\rva_t)\\
                                            & \propto \underbrace{P({\rvs}_{0}) \prod_{t=1}^{T}P({\rvs}_{t+1} | {\rvs}_{t}, {\rva}_{t})}_\text{Environment Dynamics}\underbrace{\prod_{t=1}^{T} P(\gE^A_t=1|\rvs_t,\rva_t) P(\gE^O_t=1|\rvs_t,\rva_t,\rvo_t,\rvo_{t-1})}_\text{Optimality Likelihood},
\end{align}
where $\gE\in\{0,1\}$ are observable binary ``optimal random
variables''~\cite{levine2018reinforcement},
$\tau=\{\rvs_0,\rvo_0,\rva_0,\rvs_1\ldots\}$ denotes the trajectory of
the option framework. The agent is \emph{optimal} at time step
$t$ when $P(\gE^A_t=1|\rvs_t,\rva_t)$ and
$P(\gE^O_t=1|\rvs_t,\rva_t,\rvo_t,\rvo_{t-1})$. We will use
\(\gE\) instead of \(\gE=1\) in the following text to avoid
cluttered notations. To simplify the derivation, priors $P(\rvo)$
and $P(\rva)$ can be assumed to be uniform distributions without
loss of generality \cite{levine2018reinforcement}. Note that
Eq.~\ref{eq:maxent_pgm} shares the same environment dynamics with
\Eqref{eq:smdp_pgm} and \Eqref{eq:hit_mdp_pgm}. With the optimal
random variables $\gE^O$ and $\gE^A$, the likelihood of a
state-action $\{\rvs_t,\rva_t\}$ pair that is optimal is defined
as:
\begin{equation}
  P(\gE^A_{t} | \rvs_{t}, \rva_{t})=\exp (r(\rvs_{t}, \rva_{t})),
\label{eq:e_ras}
\end{equation}
as this specific design facilitates recovering the value function
at the latter structural variational inference stage. Based on
the same motivation, the likelihood of an option-state-action
$\{\rvo_t,\rvs_t,\rva_t,\rvo_{t-1}\}$ pair that is optimal is
defined as,
\begin{equation}
  P(\gE^O_{t} |\rvs_{t}, \rva_{t},\rvo_t,\rvo_{t-1})= \exp(f(\rvo_t,\rvs_t,\rva_t,\rvo_{t-1})),
  \label{eq:e_ros}
\end{equation}
where $f(\cdot)$ is an arbitrary non-positive function which
measures the preferable of selecting an option given state-action
pair $[\rvs_t, \rva_t]$ and the previous executed option
$\rvo_{t-1}$. In this work, we choose $f$ to be the
mutual-information $f=I[\rvo_t|\rvs_t,\rva_t,\rvo_{t-1}]$ as a
fact that when the uniform prior assumption of $P(\rvo)$ is
relaxed the optimization introduces a mutual-information as a
regularizer~\cite{li2022hit}.

As explained in Section~\ref{sec:back_var}, direct optimization
of \Eqref{eq:maxent_pgm} results in optimistic policies that
assumes a degree of control over the dynamics. We correct this
risk-seeking behavior~\cite{levine2018reinforcement} through
approximating the optimal trajectory \(P(\tau)\) with the
variational distribution:
\begin{equation}
  \label{eq:var_hit_mdp}
  q(\tau) = P(\rvs_0,\rvo_0) \prod_{t=1}^{T-1} P(\rvs_{t+1}|
  \rvs_t,\rva_t) q(\rva_t|\rvs_t, \rvo_t) q(\rvo_t| \rvs_t,
  \rvo_{t-1})
\end{equation}
where the initial distribution \(P(\rvs_0,\rvo_0)\) and
transition distribution \(P(\rvs_{t+1}\mid \rvs_t,\rva_t)\) is
set to be the true environment dynamics from \(P(\tau)\). The
variational distribution turns out to be the HiT-MDP, where the
action policy \(q(\rva_t\mid \rvs_t)\) and the option policy
\(q(\rvo_t| \rvs_t, \rvo_{t-1})\) are used to approximate the
optimal policy \(P(\rva_t| \rvs_t,\rvo_t,\gE^A_{1:T})\) and
\(P(\rvo_t| \rvs_t, \rvo_{t-1},\gE^O_{1:T})\). The Evidence Lower
Bound (ELBO)~\cite{koller2009probabilistic} of the log-likelihood
optimal trajectory (\Eqref{eq:maxent_pgm}) can be derived as (see
\appref{app:deriv_lme_hitmdp}):
\begin{align}
  \label{eq:elbo}
  \nonumber \gL(q(\tau),P(\tau,\gE^A_{1:T},\gE^O_{1:T}))&=\E_{q(\tau)}[\log P(\tau,\gE^A_{1:T},\gE^O_{1:T})-\log q(\tau)]\\
  \nonumber &=\E_{q(\tau)}[r(\rvs_t,\rva_t)+f( \cdot )-\log q(\rva_t|\rvs_t,\rvo_t) - \log q(\rvo_t| \rvs_t, \rvo_{t-1})]\\
                                                        &=\E_{q(\tau)}\left[r(\rvs_t,\rva_t)+f( \cdot )+\gH[\pi^A] + \gH[\pi^O]\right]
\end{align}
where line 2 is substituting \Eqref{eq:maxent_pgm} and
\Eqref{eq:var_hit_mdp} into the ELBO. As a result, the maximum
entropy objective naturally arises in \Eqref{eq:elbo}. Optimizing
the ELBO not only seeks high-reward options but also maximizes
entropy across the space, promoting extensive exploration and
maintaining high diversity.

Given the ELBO, we now define soft value functions of the option
framework following the Bellman Backup Functions along the
trajectory $q(\tau)$ as bellow:
\begin{align}
  \label{eq:elbo_o}
  Q^{soft}_O[\rvs_t,\rvo_t]&= f(\cdot) + \E_{\rva_t \sim \pi^A}\left[Q^{soft}_A[\rvs_t,\rvo_t,\rva_t]\right] + H[\pi^A],\\
  \label{eq:elbo_a}
  Q^{soft}_A[ \rvs_t,\rvo_t,\rva_t]&= r(s,a) + \E_{\rvs_{t+1}\sim P(\rvs_{t+1}|\rvs_t,\rva_t)}\left[\E_{\rvo_{t+1}\sim\pi^O}\left[ Q^{soft}_O[\rvs_{t+1},\rvo_{t+1}]\right] +H[\pi^O]\right]
\end{align}
Assuming policies $\pi^A,\pi^O\in \Pi$ where $\Pi$ is an
arbitrary feasible set, under a tabular setting where the
inference on $\gL$ can be done exactly, we have the following
theorem holds:

\begin{thm}[Soft Option Policy Iteration Theorem]
  \label{thm:soft_mopi}
  Repeated optimizing $\gL$ and $\KL$ defined in
  \Eqref{eq:lme_hitmdp} from any \(\pi^A_0,\pi^O_0 \in \Pi\)
  converges to optimal policies \(\pi^{A*}, \pi^{O*}\) such that
  $Q^{soft*}_O[\rvs_t,\rvo_t] \geq Q^{soft}_O[\rvs_t,\rvo_t]$ and
  $Q^{soft*}_A[ \rvs_t,\rvo_t,\rva_t]\geq Q^{soft}_A[
  \rvs_t,\rvo_t,\rva_t]$, for all \(\pi^A_0,\pi^O_0 \in \Pi\) and
  \((\rvs_t, \rva_t, \rvo_t) \in \mathcal{S} \times \mathcal{A}
  \times \gO \), assuming under tabular settings where
  $|\gS|<\infty, \; |\gO|<\infty, \; |\gA|<\infty$.
  \begin{proof}
    See \appref{app:thm_soft_mopi}.
  \end{proof}
\end{thm}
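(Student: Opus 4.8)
The plan is to mirror the classical soft policy iteration argument of \citet{haarnoja2018soft}, but carried out jointly over the two coupled policies $\pi^A$ and $\pi^O$ using the nested Bellman backups in \Eqref{eq:elbo_o}--\Eqref{eq:elbo_a}. The argument splits into three standard pieces: (i) a \emph{soft option policy evaluation} lemma showing that, for fixed $(\pi^A,\pi^O)$, iterating the joint backup operator $\mathcal{T}^{\pi^A,\pi^O}$ defined by substituting \Eqref{eq:elbo_o} into \Eqref{eq:elbo_a} is a $\gamma$-contraction on the (finite-dimensional, since $|\gS|,|\gO|,|\gA|<\infty$) space of $Q$-functions, hence converges to the unique soft fixed point $(Q^{soft}_O,Q^{soft}_A)$; (ii) a \emph{soft option policy improvement} lemma showing that the M-step update — the $\argmax$ over the $\KL$ term in \Eqref{eq:lme_hitmdp}, which by the standard Gibbs/variational argument is the Boltzmann policy $\pi^{A}_{new}(\rva|\rvs,\rvo)\propto\exp Q^{soft}_A[\rvs,\rvo,\rva]$ and $\pi^{O}_{new}(\rvo|\rvs,\rvo_{t-1})\propto\exp Q^{soft}_O[\rvs,\rvo]$ restricted to $\Pi$ — yields pointwise non-decreasing soft $Q$-values, i.e.\ $Q^{soft,new}_O\ge Q^{soft,old}_O$ and $Q^{soft,new}_A\ge Q^{soft,old}_A$; and (iii) a monotone-convergence wrap-up: the sequence of soft $Q$-functions is monotone non-decreasing and bounded above (rewards bounded, entropies bounded on finite spaces), hence converges; at the limit the improvement step is a no-op, which forces the fixed-point optimality conditions and therefore $(\pi^{A*},\pi^{O*})$ is optimal in the stated sense.

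For step (i) I would write the joint backup explicitly as $(\mathcal{T}^{\pi}Q_A)[\rvs,\rvo,\rva] = r(\rvs,\rva) + \E_{\rvs'}\big[\E_{\rvo'\sim\pi^O}[f(\cdot)+\E_{\rva'\sim\pi^A}Q_A[\rvs',\rvo',\rva']+H[\pi^A]] + H[\pi^O]\big]$, absorb the (bounded) entropy and $f$ terms into an effective reward $\tilde r$, and invoke the usual sup-norm contraction with modulus $\gamma$; uniqueness of the fixed point is then immediate. For step (ii) the key identity is that for a fixed target $Q$, minimizing $\KL(q(\cdot\mid\rvs)\,\|\,\exp Q(\rvs,\cdot)/Z)$ over $q\in\Pi$ and then evaluating gives a value at least that of the old policy — the same inequality chain $Q^{old}_A[\rvs,\rvo,\rva] \le r + \E_{\rvs'}[\E_{\rvo'\sim\pi^{O,new}}[f + \E_{\rva'\sim\pi^{A,new}}Q^{old}_A + H[\pi^{A,new}]] + H[\pi^{O,new}]] = \mathcal{T}^{\pi^{new}}Q^{old}_A[\rvs,\rvo,\rva]$, then iterate $\mathcal{T}^{\pi^{new}}$ and use monotonicity of $\mathcal{T}^{\pi^{new}}$ together with step (i) to pass to $Q^{new}_A$. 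The EM interpretation in Section~\ref{sec:back_var} (E-step $=$ policy evaluation of the ELBO, M-step $=$ $\argmax$ of the $\KL$) is exactly what licenses identifying these two lemmas with the two halves of the variational iteration, so the "repeatedly optimizing $\gL$ and $\KL$" in the statement is precisely the alternation evaluate/improve.

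The main obstacle is the \emph{coupling} between $\pi^A$ and $\pi^O$ through the two-layer backup: unlike the single-policy SAC proof, the improvement step must update both policies consistently, and one must check that improving $\pi^A$ (which changes $Q^{soft}_O$ via \Eqref{eq:elbo_o}) and then improving $\pi^O$ against the \emph{updated} $Q^{soft}_O$ still yields a joint monotone improvement rather than oscillation. I expect this is handled cleanly by doing the improvement "outer-to-inner": first note $\pi^{A,new}$ improves $Q^{soft}_A$ for the \emph{old} $\pi^O$ (inner SAC-style step on the action level), which raises $Q^{soft}_O$; then $\pi^{O,new}$ improves $Q^{soft}_O$ given that raised action-value; composing the two inequalities gives the joint bound, and the finite-space boundedness from the theorem's hypotheses guarantees the resulting monotone sequence converges. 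A secondary technicality is ensuring the Boltzmann improvement target lies in (or can be projected onto) the feasible set $\Pi$ so that the $\argmax$ is attained; on finite spaces with $\Pi$ the full simplex this is automatic, and for general $\Pi$ one invokes the information-projection inequality exactly as in \citet{haarnoja2018soft}. The remaining bookkeeping — substituting \Eqref{eq:maxent_pgm} and \Eqref{eq:var_hit_mdp} to identify $\gL$ with the entropy-regularized return, and reading off the limiting optimality conditions — is routine and I would relegate it to the appendix.
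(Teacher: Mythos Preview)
Your proposal is correct and follows essentially the same three-part skeleton as the paper's proof in \appref{app:thm_soft_mopi}: (i) absorb the entropy and $f$ terms into an augmented reward and prove that the resulting Bellman operator is a $\gamma$-contraction in sup-norm on the finite state-option space, giving soft policy evaluation; (ii) defer policy improvement to the standard SAC/Gibbs argument; (iii) wrap up via monotone convergence and boundedness.

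The one substantive difference is in how much care is taken at step (ii). The paper's appendix proves the contraction in detail (working with the value function $Q_O[\rvs_t,\rvo_{t-1}]$ on the augmented state $(\rvs,\rvo)$ rather than directly on $Q_A$, which is a cosmetic difference), but then simply asserts that ``the Soft Option Policy Improvement Theorem follows conventional Soft Policy Improvement'' and invokes \autoref{thm:var_soft_mdp}. It does not spell out the two-policy coupling issue you identify, nor the outer-to-inner ordering you propose for resolving it. Your treatment is therefore more thorough on this point than the paper's own proof; the inequality chain you sketch (improve $\pi^A$ against fixed $\pi^O$, which raises $Q^{soft}_O$, then improve $\pi^O$ against the raised target, compose) is the right way to make the joint-improvement step rigorous, and is exactly what the paper's one-line deferral is implicitly relying on.
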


Theorem~\ref{thm:soft_mopi} guarantees finding the optimal
solution only when the inference can be done exactly under
tabular settings. However, real-world applications often involve
large continuous domains and employ neural networks as function
approximators. In these cases, inference procedures can only be
done approximately. This necessitate a practical approximation
algorithm which we present below.

\subsection{Variational Markovian Option Critic Algorithm}
\label{sec:meth_vmoc}
Formulating complex problems as probabilistic graphical models
(PGMs) allowing us to leverage established methods from PGM
literature to address the associated inference and learning
challenges in real-world applications. To this end, we utilizes
the structured variational inference treatment for optimizing the
log-likelihood of optimal trajectory and prove its convergence
under approximate inference settings. Specifically, using the
variational distribution $q(\tau)$ (\Eqref{eq:var_hit_mdp}) as an
approximator, the ELBO can be derived as (see
\appref{app:deriv_lme_hitmdp}):
\begin{equation}
  \label{eq:lme_hitmdp}
  \gL(q(\tau),P(\tau,\gE^A_{1:T},\gE^O_{1:T}))=- \KL(q(\tau)||P(\tau|\gE^A_{1:T},\gE^O_{1:T}))  +\log P(\gE^A_{1:T},\gE^O_{1:T})
\end{equation}
where $\KL$ is the KL-Divergence between the trajectory following
variational policies $q(\tau)$ and optimal policies
$P(\tau|\gE^A_{1:T},\gE^O_{1:T})$. Under the structural
variational inference~\cite{koller2009probabilistic} perspective,
convergence to the optimal policy can be achieved by optimizing
the ELBO with respect to the the variational policy repeatedly:
\begin{thm}[Convergence Theorem for Variational Markovian Option
  Policy Iteration]
  \label{thm:var_hitmdp}
  Let \(\tau\) be the latent variable and \(\gE^A, \gE^O\) be the
  ground-truth optimality variables. Define the variational
  distribution \(q(\tau)\) and the true log-likelihood of
  optimality \(\log P(\gE^A, \gE^O)\). iterates according to the
  update rule $q^{k+1}=\argmax_q
  \gL(q(\tau),P(\tau,\gE^A_{1:T},\gE^O_{1:T}))$ converges to the
  maximum value bounded by the true log-likelihood of optimality.
  \begin{proof}
    See \appref{app:thm_var_hitmdp}.
  \end{proof}
\end{thm}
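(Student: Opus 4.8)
The plan is to mirror the structure of the proof of Theorem~\ref{thm:var_soft_mdp}, recognizing that Theorem~\ref{thm:var_hitmdp} is its direct generalization from the single-policy MDP setting to the two-policy HiT-MDP setting. The core identity is the decomposition already displayed in \Eqref{eq:lme_hitmdp}: for any variational distribution $q(\tau)$,
\[
  \log P(\gE^A_{1:T},\gE^O_{1:T}) = \gL\big(q(\tau),P(\tau,\gE^A_{1:T},\gE^O_{1:T})\big) + \KL\big(q(\tau)\,\|\,P(\tau|\gE^A_{1:T},\gE^O_{1:T})\big).
\]
Since the left-hand side does not depend on $q$ and since $\KL(\cdot\,\|\,\cdot)\ge 0$, the ELBO $\gL$ is bounded above by $\log P(\gE^A_{1:T},\gE^O_{1:T})$ for every $q$. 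This establishes the ceiling; it remains to show the iterates $q^{k+1}=\argmax_q \gL(q(\tau),P(\tau,\gE^A_{1:T},\gE^O_{1:T}))$ form a monotone non-decreasing sequence in $\gL$.

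The monotonicity step is the EM-style argument. First I would interpret the update $q^{k+1}=\argmax_q \gL$ as the combination of an E-step that re-evaluates the soft value functions (\Eqref{eq:elbo_o}--\Eqref{eq:elbo_a}) along $q^{[k]}$ and an M-step that sets the new action policy $\pi^A$ and option policy $\pi^O$ to the Boltzmann-shaped minimizers of the corresponding $\KL$ terms — exactly as in the EM reading described after \Eqref{eq:elbo}. Because the maximization defining $q^{k+1}$ is taken over the same feasible class that contains $q^{[k]}$, we have $\gL(q^{k+1},P)\ge \gL(q^{[k]},P)$ immediately from optimality of the argmax. The only care needed is to verify that this maximization decomposes coordinate-wise over the two policies $\pi^A$ and $\pi^O$ so that the update is well-defined; this follows from the factorized form of $q(\tau)$ in \Eqref{eq:var_hit_mdp} and the additive structure of the ELBO in \Eqref{eq:elbo}, where $\gH[\pi^A]$ and $\gH[\pi^O]$ enter separately and each policy appears in a concave subproblem.

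Combining the two pieces: $\{\gL(q^{[k]},P)\}_k$ is non-decreasing and bounded above by $\log P(\gE^A_{1:T},\gE^O_{1:T})$, hence it converges by the monotone convergence theorem; at a fixed point the $\KL$ term is driven to its minimum over the feasible class, and if $\Pi$ is rich enough to contain the optimal Boltzmann policies the limit attains $\log P(\gE^A_{1:T},\gE^O_{1:T})$ exactly, recovering $\pi^{A*},\pi^{O*}$. I expect the main obstacle to be the bookkeeping in the M-step: one must confirm that fixing $\pi^O$ and optimizing over $\pi^A$, then fixing $\pi^A$ and optimizing over $\pi^O$, does not create a cyclic dependence that stalls below the bound — i.e., that the joint argmax over $(\pi^A,\pi^O)$ is genuinely separable given the value functions from the E-step. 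This is where the Markovian (as opposed to semi-Markovian) structure of the HiT-MDP is essential, and I would lean on the value-function recursions \Eqref{eq:elbo_o}--\Eqref{eq:elbo_a} to make the separation explicit, deferring the detailed verification to \appref{app:thm_var_hitmdp}.
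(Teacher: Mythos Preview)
Your proposal is correct and rests on the same core argument as the paper: the ELBO--KL decomposition of \Eqref{eq:lme_hitmdp} shows $\gL$ is bounded above by $\log P(\gE^A_{1:T},\gE^O_{1:T})$, and the update rule makes $\{\gL(q^{[k]},P)\}_k$ non-decreasing, hence convergent. The paper's proof is in fact shorter than yours: it treats the update as a single global $\argmax$ over $q$ (realized by SGD on a neural-network parameterization), so $\gL(q^{k+1},P)\ge\gL(q^{k},P)$ is immediate, and rearranging \Eqref{eq:lme_hitmdp} gives $\KL(q^{k+1}\|P)\le\KL(q^{k}\|P)$ directly --- no EM split into E/M steps and no coordinate-wise separability of $(\pi^A,\pi^O)$ is ever invoked, so the ``main obstacle'' you anticipate simply does not arise in the paper's framing.
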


We further implements a practical algorithm, the Variational
Markovian Option Critic (VMOC) algorithm, which is suitable for
complex continuous domains. Specifically, we employ parameterized
neural networks as function approximators for both the
Q-functions ($Q_{\psi^A}^{soft}$, $Q_{\psi^O}^{soft}$) and the
policies ($\pi_{\theta^A}$, $\pi_{\theta^O}$). Instead of running
evaluation and improvement to full convergence using
\autoref{thm:soft_mopi}, we can optimize the variational
distribution by taking stochastic gradient descent following
\autoref{thm:var_hitmdp} with respect to the ELBO
(\Eqref{eq:elbo}) directly. Share the same motivation
with~\citename{haarnoja2018soft} of reducing the variance during
the optimization procedure, we derive an option critic framework
by optimizing the maximum entropy objectives between the action
\Eqref{eq:elbo_a} and the option \Eqref{eq:elbo_o} alternatively.
The Bellman residual for the action critic is:
\begin{align*}
  J_{Q^A}(\psi^A_i) = \mathbb{E}&_{(\rvs_t, \rvo_t, \rva_t, \rvs_{t+1}) \sim D} \bigg[ \bigg( \min_{i=1,2}Q_{\psi^A_i}(\rvs_t, \rvo_t, \rva_t) -\\
                                &\big( r(\rvs_t, \rva_t) + \E_{\rvo_{t+1}\sim\pi^O}\left[ Q^{soft}_O[\rvs_{t+1},\rvo_{t+1}]\right] +\alpha^OH[\pi^O]\big)\bigg)^2 \bigg]
\end{align*}
where $\alpha^O$ is the temperature hyper-parameter and the
expectation over option random variable $\E_{\rvo_{t+1}\sim
  \pi^O}$ can be evaluated exactly since $\pi^O$ is a discrete
distribution. The Bellman residual for the option critic is:
\begin{align*}
  J_{Q^O}(\psi^O_i) = \mathbb{E}&_{(\rvs_t, \rvo_t) \sim D} \bigg[ \bigg( \min_{i=1,2}Q^O_{\psi^O_i}(\rvs_t, \rvo_t) -\\
                                &\big(
                                  f(\cdot) + \E_{\rva_t \sim
                                  \pi^A}\left[Q^{soft}_A[\rvs_t,\rvo_t,\rva_t]-\alpha^A\log q(\rva_t|\rvs_t,\rvo_t)\right]
                                  \big) \bigg)^2 \bigg]
\end{align*}
$\alpha^A$ is the temperature hyper-parameter. Unlike
$\E_{\rvo_{t+1}\sim\pi^O}$ can be trivially evaluated, evaluating
$\E_{\rva_t \sim \pi^A}$ is typically intractable. Therefore, in
implementation we use $\rva_t$ sampled from the replay buffer to
estimate the expectation over $\pi^A$.

Following \autoref{thm:var_hitmdp}, the policy gradients can be
derived by directly taking gradient with respect to the ELBOs
defined for the action \Eqref{eq:elbo_a} and the option
\Eqref{eq:elbo_o} policies respectively. The action policy
objective is given by:
\begin{align*}
  J_{\pi^A}(\theta^A)=-\E_{(\rvs_t, \rvo_t) \sim D}\left[\min_{i=1,2}Q_{\psi^A_i}(\rvs_t, \rvo_t, \tilde{\rva}_t) - \alpha^A\log q(\tilde{\rva}_t|\rvs_t,\rvo_t) \right], \; \tilde{\rva}_t\sim q(\cdot|\rvs_t,\rvo_t)
\end{align*}
where in practice the action policy is often sampled by using the
re-parameterization trick introduced in \cite{haarnoja2018soft}.
The option objective is given by:
\begin{align*}
  J_{\pi^O}(\theta^O)=-\E_{(\rvs_t, \rvo_{t-1}) \sim D}\left[\min_{i=1,2}Q_{\psi^O_i}(\rvs_t, \rvo_t) + \alpha^O\gH[\pi^O] \right]
\end{align*}
The variational distribution \( q(\tau) \) defined in
\Eqref{eq:var_hit_mdp} allows us to learn options as
embeddings~\cite{li2020skill,li2022hit} with a learnable
embedding matrix \( \mathbf{W} \in
\mathbb{R}^{\text{num\_options} \times \text{embedding\_dim}} \).
Under this setting, the embedding matrix \( \mathbf{W} \) can be
absorbed into the parameter vector \( \theta^O \). This
integration into VMOC ensures that options are represented as
embeddings without any additional complications, thereby
enhancing the expressiveness and scalability of the model.

The temperature hyper-parameters can also be adjusted by
minimizing the following objective:
\begin{align*}
  J(\alpha^A) = -\E_{\tilde{\rva}_t \sim \pi^A} \left[\alpha^A (\log \pi^A(\tilde{\rva}_t \mid \rvs_t, \rvo_t) + \overline{\gH})\right]
\end{align*}
for the action policy temperature \(\alpha^A\), where
\(\overline{\gH}\) is a target entropy. Similarly, the option
policy temperature \(\alpha^O\) can be adjusted by:
\begin{align*}
  J(\alpha^O) = -\E_{\rvo_t \sim \pi^O} \left[\alpha^O (\log \pi^O(\rvo_t \mid \rvs_t, \rvo_{t-1}) + \overline{\gH})\right]
\end{align*}
where \(\overline{\gH}\) is also a target entropy for the option
policy. In both cases, the temperatures \(\alpha^A\) and
\(\alpha^O\) are updated using gradient descent, ensuring that
the entropy regularization terms dynamically adapt to maintain a
desired level of exploration. This approach aligns with the
methodology proposed in SAC~\cite{haarnoja2018soft}. By adjusting
the temperature parameters, the VMOC algorithm ensures a balanced
trade-off between exploration and exploitation, which is crucial
for achieving optimal performance in complex continuous control
tasks. We summarize the VMOC algorithm in \appref{app:algo}.

\section{Cold-Start Training for Latent Reasoning}
\label{sec:coldstart}

While VMOC provides a principled framework for learning option
embeddings through interaction with the environment, deploying it
for language-based reasoning tasks presents a unique challenge:
how to initialize the latent option space $\mathcal{O}$ to
capture abstract reasoning patterns before environmental
interaction. To address this, we propose a cold-start training
phase using supervised fine-tuning (SFT) data that bridges the
gap between language understanding and latent reasoning.

\subsection{Variational Learning from Demonstrations}

We assume access to a dataset of reasoning demonstrations
$\mathcal{D} = \{(W^{(i)}, Y^{r(i)}, Y^{a(i)})\}_{i=1}^N$, where
$W$ represents the input prompt, $Y^r$ denotes the
chain-of-thought (CoT) reasoning, and $Y^a$ is the final answer.
Our goal is to learn a latent option representation $\rvo \in
\mathcal{O}$ that captures the abstract reasoning process
underlying the transition from prompt to answer.

We formulate this as a variational learning problem where the
option embeddings $\rvo$ serve as latent variables that encode
reasoning strategies. The generative model factorizes as:
\begin{align}
    p(Y^r, Y^a | W) = \int_{\rvo} p(Y^r | \rvo, W) p(Y^a | \rvo, W) p(\rvo | W) d\rvo
\end{align}
where $p(\rvo | W)$ is the prior distribution over option
embeddings given the prompt, $p(Y^r | \rvo, W)$ generates the
reasoning chain conditioned on the latent option, and $p(Y^a |
\rvo, W)$ produces the final answer.

\subsection{Evidence Lower Bound for Language}

Following the variational inference framework established in
Section~\ref{sec:meth}, we introduce a posterior distribution
$q(\rvo | W, Y^r, Y^a)$ and derive the evidence lower bound:
\begin{align}
    \log p(Y^r, Y^a | W) &\geq \mathcal{L}_{\text{SFT}}(W, Y^r, Y^a) \\
    &= \mathbb{E}_{q(\rvo | W, Y^r, Y^a)} \left[ \log p(Y^r | \rvo, W) + \log p(Y^a | \rvo, W) \right] \nonumber \\
    &\quad - \text{KL}(q(\rvo | W, Y^r, Y^a) \| p(\rvo | W))
\end{align}

This objective naturally decomposes into three components:
\begin{itemize}
    \item \textbf{Reasoning Reconstruction}: $\mathbb{E}_q[\log p(Y^r | \rvo, W)]$ ensures the latent option can reconstruct the reasoning process
    \item \textbf{Answer Reconstruction}: $\mathbb{E}_q[\log p(Y^a | \rvo, W)]$ ensures the option produces correct answers
    \item \textbf{KL Regularization}: $\text{KL}(q \| p)$ prevents the posterior from deviating too far from the prior
\end{itemize}

\subsection{Discrete Option Embeddings}

To maintain consistency with VMOC's discrete option space while enabling gradient-based optimization, we parameterize $\rvo$ as a sequence of discrete latent tokens: $\rvo = (\rvo_1, \rvo_2, ..., \rvo_L)$ where each $\rvo_i \in \{1, ..., K\}$ and $K$ is the latent vocabulary size. This allows us to leverage the same option embedding matrix $\mathbf{W} \in \mathbb{R}^{K \times d}$ used in VMOC.

The prior and posterior distributions factorize autoregressively:
\begin{align}
    p(\rvo | W) &= \prod_{i=1}^L p(\rvo_i | \rvo_{<i}, W) \\
    q(\rvo | W, Y^r, Y^a) &= \prod_{i=1}^L q(\rvo_i | \rvo_{<i}, W, Y^r, Y^a)
\end{align}

During training, we use the reparameterization trick with
Gumbel-Softmax to enable backpropagation through the discrete
sampling process.

\subsection{Training Procedure}

The cold-start training uses the SFT dataset to optimize the ELBO
objective in a single forward pass. For each data sample
containing a prompt $W$, a reasoning chain $Y^r$, and an answer
$Y^a$, the model first computes the posterior distribution
$q(\rvo | W, Y^r, Y^a)$. A discrete latent option $\rvo$ is
sampled from this posterior. This sampled option is then used to
compute two reconstruction losses: one for the reasoning chain,
$\log p(Y^r | \rvo, W)$, and one for the final answer, $\log
p(Y^a | \rvo, W)$. Concurrently, the model computes the prior
$p(\rvo|W)$ using only the prompt and calculates the KL
divergence between the posterior and prior. The final loss is a
weighted sum of the reconstruction losses and the KL divergence,
which is optimized to update the model parameters jointly.

\begin{figure}[H]
  \centering
  \includegraphics[width=0.9\linewidth]{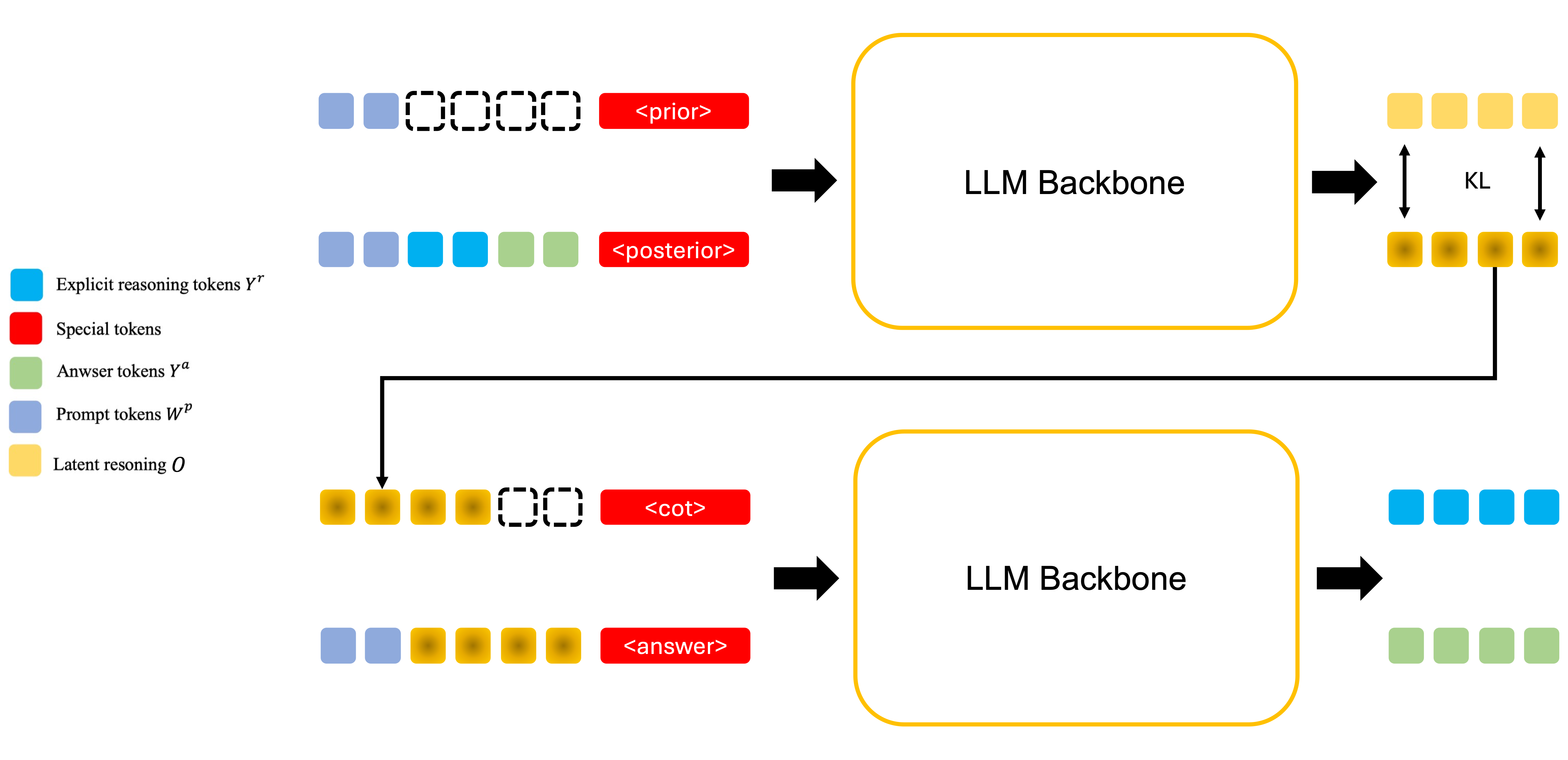}
  \caption{The training process for the cold-start phase. A
    complete data sample (prompt, CoT, answer) is fed into a
    shared encoder to produce a posterior distribution. A latent
    option is sampled and used to decode both the CoT and the
    answer, providing reconstruction signals. The KL divergence
    between the posterior and the prior (generated from the
    prompt only) acts as a regularizer. All components are
    trained jointly with a single ELBO objective.}
\end{figure}

The procedure is outlined in Algorithm 2. During inference, the
model leverages the learned prior to generate reasoning and
answers without access to the ground truth. Given a new prompt
$W$, the model first samples a latent option $\rvo$ from the
prior distribution $p(\rvo|W)$. This latent option, which
encapsulates an abstract reasoning strategy, is then conditioned
upon to first generate the chain-of-thought reasoning $Y^r$ and
subsequently the final answer $Y^a$.

\begin{figure}[H]
  \centering
  \includegraphics[width=0.9\linewidth]{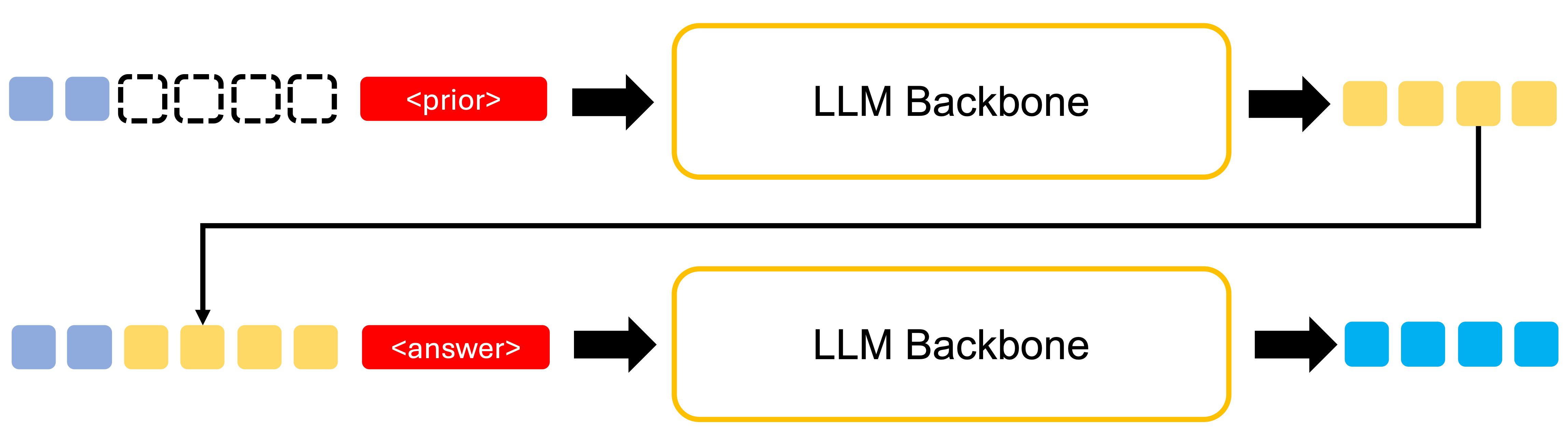}
  \caption{The inference process. With no access to the ground
    truth reasoning or answer, the model first generates a
    distribution over latent options from the prior network given
    only the input prompt. A latent option is sampled and then
    used to autoregressively generate the final answer.}
\end{figure}

\begin{algorithm}[H]
\caption{Cold-Start Training for Latent Reasoning}
\begin{algorithmic}[1]
\STATE \textbf{Input:} SFT dataset $\mathcal{D}$, learning rate $\eta$, KL weight $\beta$
\STATE \textbf{Initialize:} Option embeddings $\mathbf{W}$, encoder/decoder parameters $\theta$
\FOR{each batch $(W, Y^r, Y^a) \sim \mathcal{D}$}
    \STATE Compute posterior $q(\rvo | W, Y^r, Y^a)$ using encoder
    \STATE Sample $\rvo \sim q(\rvo | W, Y^r, Y^a)$ via Gumbel-Softmax
    \STATE Compute prior $p(\rvo | W)$ using encoder with prompt only
    \STATE Compute reconstruction losses:
    \STATE \quad $\mathcal{L}_{\text{CoT}} = -\log p(Y^r | \rvo, W)$
    \STATE \quad $\mathcal{L}_{\text{ans}} = -\log p(Y^a | \rvo, W)$
    \STATE Compute KL divergence: $\mathcal{L}_{\text{KL}} = \text{KL}(q \| p)$
    \STATE Update parameters: $\theta \leftarrow \theta + \eta \nabla_\theta (\mathcal{L}_{\text{CoT}} + \mathcal{L}_{\text{ans}} + \beta \mathcal{L}_{\text{KL}})$
\ENDFOR
\end{algorithmic}
\end{algorithm}

\subsection{Integration with VMOC}

After cold-start training, the learned option embeddings
$\mathbf{W}$ and prior distribution $p(\rvo | W)$ provide a rich
initialization for VMOC. The option policy $\pi^O(\rvo_t |
\rvs_t, \rvo_{t-1})$ in VMOC can be initialized using the learned
prior, where the state $\rvs_t$ includes the current language
context. This enables VMOC to begin with meaningful option
representations that encode diverse reasoning strategies, rather
than starting from random initializations.

The cold-start phase essentially provides VMOC with a "library"
of reasoning patterns encoded in the option space, which can then
be refined through environmental interaction. This two-stage
approach—variational pre-training followed by reinforcement
learning—enables the development of agents that can think
abstractly in latent space while communicating naturally in
language.

\section{Continuous HiT-MDP Homomorphisms}
\label{sec:cont_hitmdp_homomorphism}

In the preceding sections, we introduced VMOC as a practical
algorithm for learning latent options and proposed a cold-start
training phase to initialize these options for implicit reasoning
tasks from demonstrations. This allows an agent to operate in a
simplified, abstract space of latent "thoughts." However, a
critical question remains: what theoretical guarantees do we have
that a policy learned in this abstract option space is sound?
Without a formal grounding, we cannot be sure that solving the
problem in the abstract space is equivalent to solving the
original, more complex problem. To provide this crucial
justification, we now establish the theoretical foundation for
our approach by extending the theory of continuous MDP
homomorphisms~\cite{panangaden2024policy}. Building on recent
work that formalized homomorphisms for standard continuous MDPs,
we develop the first, to our knowledge, continuous homomorphism
for option-based HiT-MDPs. This framework will allow us to prove
that optimal value functions are preserved under our state-option
abstractions, providing a principled guarantee for learning in
the abstract spaces discovered by VMOC.

Based on the fact that HiT-MDP can be re-formulated as and is
equivalent to a continuous MDP~\cite{li2021skill,li2023hit}, we
follow~\cite{panangaden2024policy} and propose the Continuous
HiT-MDP Homomorphism for learning temporal-state-action
abstractions. First, we introduce a construction from
differential topology.

\begin{definition}[Vector bundle]
    Let $M$ be a topological manifold. A \emph{real vector bundle over $M$} (\emph{base space}) is a topological space $E$ (\emph{total space of the bundle}) together with a surjective continuous map $\text{pr}: E \to M$ (bundle projection) satisfying the following conditions:
    \begin{enumerate}
        \item For each $p \in M$, the fiber $E_p = \text{pr}^{-1}(p)$ over $p$ is endowed with the structure of a $k$-dimensional real vector space.($k_p$ may vary as $p$ changes)
        \item For each $p \in M$, there exists a neighborhood $U$ of $p$ in $M$ and a homeomorphism (not homomorphism) $\Phi: \text{pr}^{-1}(U) \to U \times \mathbb{R}^{k_p}$ (called a local trivialization of $E$ over $U$), satisfying the following conditions: 
            \begin{enumerate}
                \item $\text{pr}_U \circ \Phi = \text{pr}$ (where $\text{pr}_U : U \times \mathbb{R}^{k_p} \to U$ is the projection);
                \item for each $q \in U$, the restriction of $\Phi$ to $E_q$ is a vector space isomorphism from $E_q$ to ${q} \times \mathbb{R}^{k_q} \cong \mathbb{R}^{k_q}$. 
            \end{enumerate}
    \end{enumerate}

\end{definition}

Intuitively, a collection of vector spaces, one for each point in $M$; glued together in a way that looks locally like the Cartesian product of $M$ with $\mathbb{R}^n$, but globally can be 'twisted'. 
This local property will be used in the proof of \ref{thm:opt_continuous_HiT-MDP}.

\begin{remark}
    A vector bundle $E$ itself is a topological manifold. Fix a point $e \in E$. There is a chart around $e$ giving the base coordinates and the fiber coordinates of $e$. For details of construction, see (Loring Tu, Differential Geometry). 
\end{remark}

\begin{remark}
    Every fiber $\text{pr}^{-1}({x}$) is a real vector space of finite dimensions and therefore has dimension $k_x$ and local trivializations show that the function $x \mapsto k_x$ is locally constant and therefore is constant on each component of $X$.
\end{remark}

Now, we are ready to define continuous HiT-MDPs and state our underlying assumptions.

\begin{definition}[Continuous HiT-MDP] A \emph{continuous Hidden Temporal Markov decision process (Hit-MDP)} is a $7$-tuple:
  \[\mathcal{M} = (\mathcal{S}, E_{\mathcal{O}},\mathcal{B},\mathcal{A} \x \mathcal{O},\forall (a, o)\in \mathcal{A} \x \mathcal{O}\;\; \tau_{(a, o)}:\mathcal{S}\x\mathcal{B}\to[0,1],R:\mathcal{S}\x \mathcal{A} \to \R, \gamma),\] where $\mathcal{S}$ is our state space, assumed to be a topological manifold, $E_{\mathcal{O}}$ is a vector bundle over $\mathcal{S}$, in which $\mathcal{O}$, the option space, is a family of vector spaces (of different dimensions), as fibers. $\mathcal{S}$ is assumed to be a topological manifold, $\mathcal{B}$ is the Borel $\sigma$-algebra on $E_{\mathcal{O}}$, $\mathcal{A} \x \mathcal{O}$ is the augmented space of \emph{actions} and space of options, which is a
  locally compact Hausdorff space, $\tau_{(a, o)}$ is
  the transition probability kernel for each possible action-policy pair $(a, o)$, for each fixed $e \in E_\mathcal{O}$,
  $\tau_{(a, o)}(\cdot|e)$ is a probability distribution on $\mathcal{B}$ while $R$ is the reward
  function, and $\gamma$ is the discount factor. 
\end{definition}

\begin{remark}
    It is natural to assume that $\mathcal{O}$ (option space) and $\mathcal{S}$ (state space) have an intrinsic vector bundle structure $E_{\mathcal{O}}$ with $\mathcal{S}$ as base space and $\mathcal{O}$ as fiber. Each learning scenario naturally corresponds to a connected component of the state manifold $\mathcal{S}$. For each component $k$, we give a different  
option vector space $\mathcal{O}_k$ as its fiber space. Assembling the components together, we obtain a vector bundle $E_\mathcal{O}$ such that for each $s \in \mathcal{S}$ in a component $k$, the fiber at $s$ is $E_s = \text{pr}^{-1}(s) = (s, \mathcal{O}_k)$. 
\end{remark}

\begin{remark}
    $E_\mathcal{O}$ is a topological manifold, which is of course a topological space, just as $\mathcal{S}$ in the corresponding definition in ~\cite{panangaden2024policy}. $\mathcal{A} \x \mathcal{O}$, the augmented space, is endowed with the product topology; Since both $\mathcal{A}$ and $\mathcal{O}$ are locally compact, the augmented space is locally compact; $\mathcal{A} \x \mathcal{O}$ is Hausdorff since both $\mathcal{A}$ and $\mathcal{O}$ are. Follow ~\cite{panangaden2024policy}, we have the corresponding existence of policy lifting (\ref{prop:Hit_MDP_lift_existence}) easily.
\end{remark}

Before we introduce our HiT-MDP homomorphism, we would like to introduce the concept of bundle maps to represent the relation between the state-option spaces. 

\begin{definition}[Bundle maps]
    Let $\text{pr}_E: E \to M$ and $\text{pr}_F: F \to N$ be vector bundles. In this paper, a bundle map from $E$ to $F$ is a pair of continuous maps ($\varphi_\text{bundle} : E \to F$, $\varphi_\text{base}: M \to N$) such that the following diagram commutes:
    \[
        \begin{tikzcd}
            E \arrow[r, "\varphi_\text{bundle}"] \arrow[d, "\text{pr}_E"'] & F \arrow[d, "\text{pr}_F"] \\
            M \arrow[r, "\varphi_\text{base}"'] & N
        \end{tikzcd}
    \]
\end{definition}

Abusing the language, we often call the map $\varphi =
(\varphi_\text{bundle}, \varphi_\text{base}): E \to F$ alone the
bundle map. Now we are ready to define our continuous HiT-MDP
homomorphism here.

\begin{definition}[Continuous HiT-MDP Homomorphism]
\label{def: Continuous HiT-MDP Homomorphism}
    A \emph{continuous HiT-MDP homomorphism} is a map $h = ( f, g_e ): \mathcal{M} \to \overline{\mathcal{M}}$ where $f: E_{\mathcal{O}} \to \overline{E}_{\overline{{\mathcal{O}}}}$ is a bundle map and for every $e \in  E_{\mathcal{O}}$, $g_e: \mathcal{A} \x \mathcal{O} \to \overline{\mathcal{A}} \x \overline{\mathcal{O}}$ are Borel-measurable, surjective maps such that the following hold:\\
    \begin{enumerate}
        \item Invariance of reward: $ \overline{R}(f(e), g_e(a, o)) = R(e,(a, o)) \qquad \forall e \in E_{\mathcal{O}}, (a, o) \in \mathcal{A} \x \mathcal{O} $ 
        \item Equivariance of transitions: fix any $e \in E_\mathcal{O}, (a, o) \in \mathcal{A} \x \mathcal{O}$, \\
        $\overline{\tau}_{g_e(a, o)}(\overline{B}| f(e)) = \tau_{(a, o)}(f^{-1}(\overline{B})| s)  \qquad \forall \; \overline{B} \in \overline{\mathcal{B}}$
    \end{enumerate}
\end{definition}

\begin{remark}
    In this definition, our definition of a bundle map $f$ can be illustrated in the following diagram:
    \[
        \begin{tikzcd}
            E_{\mathcal{O}} \arrow[r, "f_\text{bundle}"] \arrow[d, "\text{pr}_S"'] & \overline{E}_{\overline{\mathcal{O}}} \arrow[d, "\text{pr}_{\overline{S}}"] \\
            S \arrow[r, "f_\text{base}"'] & \overline{S}
        \end{tikzcd}
    \]
    Our definition implies that our map $f$ should not only satisfy Definition~\ref{def: Continuous HiT-MDP Homomorphism} but also satisfy this diagram. The philosophy behind this is that given an element $e \in E_\mathcal{O}$, which locally could be interpreted as a state-option tuple $(s, o) \in \mathcal{S} \times \mathcal{O}$, there are two ``paths'' to the image HiT-MDP. Path $\text{pr}_{\overline{S}} \circ f_\text{bundle}$: We can encode this tuple through $f$ and it becomes $(\overline{s}, \overline{o})$; this abstraction then contains the abstraction of $s$ and can be projected onto $\overline{S}$ by $\text{pr}_{\overline{S}}$ to get $\overline{s}$, path $f_\text{base} \circ \text{pr}_S$: we encode $s$ directly through $\underline{f}$. These two encoding paths should end at the same destination. This will also be used in the proof of Theorem~\ref{thm:opt_continuous_HiT-MDP}.
\end{remark}

\begin{remark}
    The equivariance of transitions is defined using the Borel $\sigma$-algebra defined on the image MDP; it states that, for any fixed $e \in E_\mathcal{O}, (a, o) \in \mathcal{A} \x \mathcal{O}$, the measure $\overline{\tau}_{g_e(a,o)}(\cdot| f(e))$ is the pushforward measure of $\tau_{(a, o)}(\cdot | e)$ on $E_{\mathcal{O}}$ under the bundle map $f$. 
\end{remark}

\subsection{Optimal Value Equivalence}

\begin{theorem}[Optimal Value Equivalence in HiT-MDP]
\label{thm:opt_continuous_HiT-MDP}
Let $\overline{\mathcal{M}} = (\overline{\mathcal{S}}, \overline{E}_{\overline{\mathcal{O}}}, \overline{\mathcal{B}}, \overline{\mathcal{A}} \x \overline{\mathcal{O}}, \overline{\tau}_{\overline{(a, o)}}, \overline{R}, \overline{\gamma})$ be the image of a continuous MDP homomorphism $h = (f, g_e)$ from $\mathcal{M} = (\mathcal{S}, E_\mathcal{O}, \mathcal{B}, \mathcal{A} \x \mathcal{O}, \tau_{(a, o)}, R, \gamma)$.  Then for any $(e, a, o) \in E_\mathcal{O} \times \mathcal{A} \times \mathcal{O}$ we have:
$$Q^*(e,a,o) = \overline{Q}^*(f(e), g_e(a,o)),$$
where $Q^*,  \overline{Q}^*$ are the optimal action-value functions for $\mathcal{M}$ and $\overline{\mathcal{M}}$, respectively.
\end{theorem}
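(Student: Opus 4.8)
The plan is to realize the homomorphism at the level of Bellman optimality operators and then invoke uniqueness of fixed points; this is the continuous, measure-theoretic version of the classical ``value-iteration coupling'' argument for MDP homomorphisms. Since $R$ is bounded and $\gamma<1$, the Bellman optimality operator $\mathcal{T}$, given by
\[
(\mathcal{T}Q)(e,a,o)=R(e,(a,o))+\gamma\int_{E_\mathcal{O}} V_Q(e')\,\tau_{(a,o)}(de'|e),\qquad V_Q(e'):=\sup_{(a',o')\in\mathcal{A}\x\mathcal{O}}Q(e',a',o'),
\]
is a $\gamma$-contraction on the Banach space of bounded Borel functions on $E_\mathcal{O}\x\mathcal{A}\x\mathcal{O}$, so $Q^*$ is its unique fixed point (and likewise $\overline{Q}^*$ is the unique fixed point of $\overline{\mathcal{T}}$ on $\overline{\mathcal{M}}$, with associated value $\overline{V}_{\bar q}(\bar e')=\sup_{(\bar a',\bar o')}\bar q(\bar e',\bar a',\bar o')$). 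The first thing I would do is check that $\mathcal{T}$ is well-defined: the map $e'\mapsto V_Q(e')$ is a supremum over the uncountable set $\mathcal{A}\x\mathcal{O}$, so Borel-measurability needs justification. Working in a local trivialization $\mathrm{pr}^{-1}(U)\cong U\x\R^{k}$ from the vector-bundle definition, the restricted process is an ordinary continuous MDP, so the measurable-maximum/measurable-selection machinery of \cite{panangaden2024policy} applies chart by chart; gluing over a countable cover of $\mathcal{S}$ by trivializing neighborhoods yields measurability of $V_Q$ globally. This is precisely the ``local property'' flagged after the vector-bundle definition.

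Next I would define the pullback operator $\Phi$ on bounded Borel functions by $(\Phi\bar q)(e,a,o):=\bar q\bigl(f(e),g_e(a,o)\bigr)$ — well-defined and bounded-Borel since $f$ and each $g_e$ are Borel-measurable — and prove the intertwining identity $\mathcal{T}\circ\Phi=\Phi\circ\overline{\mathcal{T}}$. The reward term of $\mathcal{T}(\Phi\bar q)(e,a,o)$ is $R(e,(a,o))=\overline{R}(f(e),g_e(a,o))$ by the invariance-of-reward axiom, matching the reward term of $\Phi(\overline{\mathcal{T}}\bar q)(e,a,o)$. For the transition term, the key step is that the value of $\Phi\bar q$ factors through $f$: for any $e'$,
\[
V_{\Phi\bar q}(e')=\sup_{(a',o')}\bar q\bigl(f(e'),g_{e'}(a',o')\bigr)=\sup_{(\bar a',\bar o')\in\overline{\mathcal{A}}\x\overline{\mathcal{O}}}\bar q\bigl(f(e'),\bar a',\bar o'\bigr)=\overline{V}_{\bar q}\bigl(f(e')\bigr),
\]
where the middle equality is exactly where surjectivity of $g_{e'}$ is used, and the resulting identity $V_{\Phi\bar q}=\overline{V}_{\bar q}\circ f$ no longer mentions $g$. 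Reading the equivariance-of-transitions axiom as the statement $\overline{\tau}_{g_e(a,o)}(\cdot\,|\,f(e))=f_*\bigl(\tau_{(a,o)}(\cdot\,|\,e)\bigr)$ and upgrading from Borel sets to a change-of-variables formula via simple-function approximation, I get
\[
\int_{E_\mathcal{O}} V_{\Phi\bar q}(e')\,\tau_{(a,o)}(de'|e)=\int_{E_\mathcal{O}}\bigl(\overline{V}_{\bar q}\circ f\bigr)(e')\,\tau_{(a,o)}(de'|e)=\int_{\overline{E}_{\overline{\mathcal{O}}}}\overline{V}_{\bar q}(\bar e')\,\overline{\tau}_{g_e(a,o)}(d\bar e'|f(e)).
\]
Combining the two terms gives $\mathcal{T}(\Phi\bar q)=\Phi(\overline{\mathcal{T}}\bar q)$. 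Applying this to $\bar q=\overline{Q}^*$ yields $\mathcal{T}(\Phi\overline{Q}^*)=\Phi(\overline{\mathcal{T}}\,\overline{Q}^*)=\Phi\overline{Q}^*$, so $\Phi\overline{Q}^*$ is a bounded fixed point of $\mathcal{T}$, whence $\Phi\overline{Q}^*=Q^*$ by uniqueness; that is, $Q^*(e,a,o)=\overline{Q}^*(f(e),g_e(a,o))$. Equivalently, one can run the same bookkeeping along the value-iteration sequences $Q_n=\mathcal{T}^{n}0$ and $\overline{Q}_n=\overline{\mathcal{T}}^{n}0$, proving $Q_n=\Phi\overline{Q}_n$ by induction and passing to the uniform limit.

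The main obstacle I expect is technical rather than conceptual, and it lives entirely in the continuous, varying-fiber setting: (i) Borel-measurability of $V_Q$ and $\overline{V}_{\bar q}$ — suprema over an uncountable action--option space — which forces the chart-by-chart measurable-selection argument above and is where local triviality of $E_\mathcal{O}$ genuinely enters; (ii) promoting the equivariance axiom, stated only on Borel sets of $\overline{E}_{\overline{\mathcal{O}}}$, to a change-of-variables identity for all bounded Borel integrands, i.e. checking that ``$\overline{\tau}_{g_e(a,o)}(\cdot|f(e))$ is the pushforward of $\tau_{(a,o)}(\cdot|e)$ under $f$'' is strong enough — it is, by standard simple-function approximation, since $f$ is measurable as a continuous bundle map; and (iii) carefully tracking the $e$-dependence of $g_e$ so that it does not obstruct the factorization $V_{\Phi\bar q}=\overline{V}_{\bar q}\circ f$ — it does not, since only pointwise surjectivity of each $g_{e'}$ is used. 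The bundle-map commuting square $\mathrm{pr}_{\overline S}\circ f=f_{\text{base}}\circ\mathrm{pr}_{S}$ is not needed for the value computation itself, but I would invoke it to confirm that the induced state abstraction $f_{\text{base}}$ on $\mathcal{S}$ is consistent with the fiberwise abstraction, keeping the interpretation ``solving $\overline{\mathcal{M}}$ is solving $\mathcal{M}$'' coherent when value functions are later projected down to states.
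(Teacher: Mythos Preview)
Your proposal is correct and follows essentially the same route as the paper: both arguments hinge on reward invariance for the constant term, surjectivity of $g_{e'}$ to convert $\sup_{(a',o')}$ into $\sup_{(\bar a',\bar o')}$, and the pushforward reading of transition equivariance to change variables in the integral. The paper carries this out as an explicit induction on the value-iteration iterates $Q_m$ followed by passage to the limit---precisely the alternative you mention at the end---whereas you package the same computation as an intertwining identity $\mathcal{T}\circ\Phi=\Phi\circ\overline{\mathcal{T}}$ plus fixed-point uniqueness, and you are more careful than the paper about the measurability of $V_Q$.
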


\begin{proof}
We prove the equivalence of the value functions under a continuous HiT-MDP homomorphism. The homomorphism is defined by bundle map $f: E_{\mathcal{O}} \to \overline{E}_{\overline{{\mathcal{O}}}}$ and, for each $e \in E_{\mathcal{O}}$, a surjective map $g_e: \mathcal{A} \times \mathcal{O} \to \overline{\mathcal{A}} \times \overline{\mathcal{O}}$. 

Since $E_\mathcal{O}$ is a vector bundle, locally we can write our full system state as $e_t = (s_t, o_{t-1})$ For simplicity and clarity, we use $e_{t}$ and $(s_t, o_{t-1})$ interchangeably in derivations below. We denote an action as the pair $\alpha_t = (a_t, o_t)$.  Due to conditional independencies introduced in~\cite{li2023hit}, the policy over primitive actions typically depends on the current state and current option, $\pi(a_t|s_t, o_t)$, and the option policy depends on the current state and previous option, $P(o_t|s_t, o_{t-1})$. Therefore, we have

$$Q_n (e_t, a_t, o_t) = Q_m(s_t, o_{t-1}, a_t, o_t) = Q_m(s_t, a_t, o_t).$$

We will prove the following claim by induction.

\begin{claim}
Define the sequence of value functions $Q_m$: 
$$Q_m(s_t, o_{t-1}, a_t, o_t) = R(s_t, a_t) + \gamma \int_{s_{t+1}} P(ds_{t+1}|s_t, a_t) V_{m-1}(s_{t+1}, o_t)$$
where the state-value function $V^\pi_{m-1}$ is the expected action-value at the next state $(s_{t+1}, o_t)$:
$$V_{m-1}(s_{t+1}, o_t) = \mathbb{E}_{\pi(\cdot|s_{t+1}, o_t)}[Q_{m-1}(s_{t+1}, o_t, \cdot)]$$
This expectation expands to:
$$V_{m-1}(s_{t+1}, o_t) = \int_{o_{t+1}} P(do_{t+1}|s_{t+1}, o_t) \int_{a_{t+1}} \pi(da_{t+1}|s_{t+1}, o_{t+1}) Q_{m-1}(s_{t+1}, o_t, a_{t+1}, o_{t+1})$$
Let $Q_0(s_t, o_{t-1}, a_t, o_t) = R(s_t, a_t)$. Let $\overline{Q}_m$ be the analogously defined sequence for the abstract MDP $\overline{\mathcal{M}}$ and policy $\overline{\pi}$. Then for any state-action tuple, we claim:
$$Q_m(s_t, o_{t-1}, a_t, o_t) = \overline{Q}_m(f(s_t, o_{t-1}), g_{e_t}(a_t, o_t)).$$
\end{claim}

We prove this claim by induction on the iteration step $m$.

\paragraph{Base Case ($m=0$):}
The base case follows directly from the reward invariance property of the homomorphism:
$$Q_0(s_t, o_{t-1}, a_t, o_t) = R(s_t, a_t) = \overline{R}(f(s_t, o_{t-1}), g_{(s_t, o_{t-1})}(a_t, o_t)) = \overline{Q}_0(f(s_t, o_{t-1}), g_{(s_t, o_{t-1})}(a_t, o_t)).$$

\paragraph{Inductive Step:}
Assume the claim holds for iteration $m-1$. We show it holds for iteration $m$.

\begin{align}
    Q_m&(s_t, o_{t-1}, a_t, o_t) \\
    = & R(s_t, a_t) + \gamma \int_{e_{t+1}\in E_{\cO}} P(de_{t+1}|s_t, a_t) V_{m-1}^*(s_{t+1}, o_t) \\
    = & R(s_t, a_t) + \gamma \int_{e_{t+1}\in E_{\cO}} P(de_{t+1}|s_t, a_t)  \sup_{(a_{t+1}, o_{t+1})\in \cA \x \cO } Q_{m-1}(s_{t+1}, o_{t+1}, a_{t+1}) \\
    = & \overline{R}(f(s_t, o_{t-1}), g_{(s_t, o_{t-1})}(a_t, o_t)) \nonumber\\
  & + \gamma \int_{e_{t+1}\in E_{\cO}} P(de_{t+1}|s_t, a_t) \sup_{(a_{t+1}, o_{t+1})\in \cA \x \cO } \overline{Q}_{m-1}(f(e_{t+1}), g_{e_t}(a_{t+1}, o_{t+1})) \label{def_Q} \\
       = & \overline{R}(f(s_t, o_{t-1}), g_{(s_t, o_{t-1})}(a_t, o_t))  \nonumber\\
  &+ \gamma \int_{e_{t+1}\in E_{\cO}} P(de_{t+1}|s_t, a_t) \sup_{(\bar{a}_{t+1}, f_{\text{bundle}}(o_{t+1}))\in \bar{\cA} \x \bar{\cO} } \overline{Q}_{m-1}(f(e_{t+1}), \bar{a}_{t+1}, f_{\text{bundle}}(o_{t+1})) \label{surjective_part} \\
       = & \overline{R}(f(s_t, o_{t-1}), g_{(s_t, o_{t-1})}(a_t, o_t))  \nonumber\\
  & + \gamma \int_{e_{t+1}\in \overline{E_\cO}}{ P(de_{t+1}|e_t, \bar{a}_t) \sup_{(\bar{a}_{t+1}, \bar{o}_{t+1})\in \bar{\cA} \x \bar{\cO} } \overline{Q}_{m-1}(e_{t+1}, \bar{a}_{t+1}, \bar{o}_{t+1})} \label{apply_CVF_Q} \\
    = & \overline{Q}_m(f(e_{t+1}), g_{e_t}(a_t, o_t))
\end{align}

\begin{enumerate}
    \item (\ref{def_Q}) follows from applying the inductive hypothesis to $Q^\pi_{m-1}$ inside the expectation.
    \item (\ref{surjective_part}) is the crucial step. It follows from the fact that $g_{e_t}$ is surjective. 
    
    \item (\ref{apply_CVF_Q}) follows from the change of variables formula (transition equivariance in the definition \ref{def: Continuous HiT-MDP Homomorphism} and the fact that $f$ is a bundle map. Here, the pushforward probability distribution of $P(de_{t+1}|s_t, a_t)$ is equal to $P(de_{t+1}|e_t, \bar{a}_t)$ and the integrand is a function from $E_\cO \to \mathbb{R}$ in the form $e_{t+1} \to 
    \sup_{(\bar{a}_{t+1}, f_{\text{bundle}}(o_{t+1}))\in \bar{\cA} \x \bar{\cO} } \overline{Q}_{m-1}(f(e_{t+1}), \bar{a}_{t+1}, f_{\text{bundle}}(o_{t+1}))$. 
    
    \item The final line is the definition of $\overline{Q}^{\overline{\pi}}_m$.
\end{enumerate}

This concludes the induction. \\

Since iterative policy evaluation converges to the true value function of the policy, $Q^\pi_m \to Q^\pi$ as $m \to \infty$. It follows that for any $(s_t, o_{t-1}, a_t, o_t)$,
$$Q^\pi(s_t, o_{t-1}, a_t, o_t) = \overline{Q}^{\overline{\pi}}(f(s_t, o_{t-1}), g_{(s_t, o_{t-1})}(a_t, o_t)).$$
By following the conditional independence mentioned above: 
$$Q^*(e_t, a_t, o_t) = \overline{Q}^*(f(e_t), g_{e_t}(a_t, o_t)).$$
\end{proof}

Just as in \cite{panangaden2024policy}, we can derive the existence of the lifted policies via an HiT-MDP homomorphism. We will formalize our lifted policy for continuous MDP homomorphism as follows: 

\begin{definition}[HiT-MDP Policy Lifting]
\label{def:HiT-MDP-liftedpolicy}
Let $\overline{\mathcal{M}} = (\overline{\mathcal{S}}, \overline{E}_{\overline{\mathcal{O}}}, \overline{\mathcal{B}}, \overline{\mathcal{A}} \x \overline{\mathcal{O}}, \overline{\tau}_{\overline{(a, o)}}, \overline{R}, \overline{\gamma})$ be the image of a continuous MDP homomorphism $h = (f, g_e)$ from $\mathcal{M} = (\mathcal{S}, E_\mathcal{O}, \mathcal{B}, \mathcal{A} \x \mathcal{O}, \tau_{(a, o)}, R, \gamma)$. Then for any policy $\overline{\pi} : \overline{\mathcal{S}} \to \Dist(\overline{\mathcal{A}} \x \overline{\mathcal{O}})$ defined on $\overline{\mathcal{M}}$, a policy $\pi^\uparrow : E_\mathcal{O} \to \Dist(\mathcal{A} \x \mathcal{O})$ on $\mathcal{M}$ is a \emph{lifted} policy of $\overline{\pi}$ if:
\begin{equation}
    \pi^\uparrow(g_e^{-1}(\beta) | e) = \overline{\pi}(\beta | f(e))
    \label{eq:policy_lifting}
\end{equation}
for every Borel set $\beta \in \overline{\mathcal{B}}$ and $e \in E_\mathcal{O}$. In other words, fix any $e \in E_\mathcal{O}$, $\overline{\pi}(f(e), \cdot)$ is the pushforward measure of  $\pi^\uparrow(e, \cdot)$ on the space $\overline{\mathcal{A}} \x \overline{\mathcal{O}}$,  for all  with respect to $g_e$.
\end{definition}

\begin{proposition}
\label{prop:Hit_MDP_lift_existence}
   Let $\overline{\mathcal{M}} = (\overline{\mathcal{S}}, \overline{E}_{\overline{\mathcal{O}}}, \overline{\mathcal{B}}, \overline{\mathcal{A}} \x \overline{\mathcal{O}}, \overline{\tau}_{\overline{(a, o)}}, \overline{R}, \overline{\gamma})$ be the image of a continuous MDP homomorphism $h = (f, g_e)$ from $\mathcal{M} = (\mathcal{S}, E_\mathcal{O}, \mathcal{B}, \mathcal{A} \x \mathcal{O}, \tau_{(a, o)}, R, \gamma)$, where $\mathcal{A} \x \mathcal{O}$ and $\overline{\mathcal{A}} \x \overline{\mathcal{O}}$ are locally compact Hausdorff spaces. Then for any policy $\overline{\pi} : \overline{E_\mathcal{O}} \to \Dist(\overline{\mathcal{A}} \times \overline{\mathcal{O}})$ defined on $\overline{\mathcal{M}}$, there exists a lifted policy $\pi^\uparrow: E_\mathcal{O} \to \Dist(\mathcal{A} \x \mathcal{O})$ in the sense of Definition~\ref{def:HiT-MDP-liftedpolicy}.
\end{proposition}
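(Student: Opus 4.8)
The plan is to build the lifted policy $\pi^\uparrow$ as a genuine Borel transition kernel $e\mapsto\pi^\uparrow(\cdot\mid e)\in\Dist(\cA\x\cO)$, reducing the proposition to a parametrized measurable selection problem. Fix $e\in E_\cO$ and set $\nu_e:=\overline{\pi}(\cdot\mid f(e))\in\Dist(\overline{\cA}\x\overline{\cO})$; since $f$ is continuous (it is a bundle map) and $\overline{\pi}$ is a kernel, the map $e\mapsto\nu_e$ is Borel measurable. The key observation is that it suffices, for each fixed $e$, to produce \emph{some} $\mu_e\in\Dist(\cA\x\cO)$ whose pushforward satisfies $(g_e)_*\mu_e=\nu_e$: any such $\mu_e$ gives $\pi^\uparrow(g_e^{-1}(\beta)\mid e)=\mu_e(g_e^{-1}(\beta))=\bigl((g_e)_*\mu_e\bigr)(\beta)=\overline{\pi}(\beta\mid f(e))$ for every Borel $\beta\in\overline{\mathcal{B}}$, which is exactly the lifting identity of Definition~\ref{def:HiT-MDP-liftedpolicy} (equivalently, $\overline{\pi}(f(e),\cdot)$ is the pushforward of $\pi^\uparrow(e,\cdot)$ under $g_e$).

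For the pointwise construction I would exhibit a measurable right inverse (section) of $g_e$. Because $\cA\x\cO$ and $\overline{\cA}\x\overline{\cO}$ are locally compact Hausdorff — and, assuming second countability as is implicit in the base-MDP setting of \cite{panangaden2024policy}, standard Borel — and $g_e$ is a Borel-measurable surjection between them, the Jankov–von Neumann uniformization theorem (or Kuratowski–Ryll-Nardzewski, if one wants a closed-valued reformulation) yields a universally measurable map $\sigma_e:\overline{\cA}\x\overline{\cO}\to\cA\x\cO$ with $g_e\circ\sigma_e=\mathrm{id}$. Defining $\mu_e:=(\sigma_e)_*\nu_e$ then gives $(g_e)_*\mu_e=(g_e\circ\sigma_e)_*\nu_e=\nu_e$, establishing pointwise existence of the lifted policy. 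This is the exact analogue of the argument used for ordinary continuous MDP homomorphisms in \cite{panangaden2024policy}; the fiber/vector-bundle structure of $E_\cO$ plays no essential role at this step, since $g_e$ already encodes all the relevant identifications.

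The remaining — and genuinely delicate — step is joint measurability in $e$, so that $\pi^\uparrow$ is a bona fide transition kernel and not merely a pointwise family of measures. I would apply a parametrized version of the selection theorem to the multifunction sending $e$ to the graph $\{(\bar{\alpha},\alpha)\in(\overline{\cA}\x\overline{\cO})\x(\cA\x\cO):g_e(\alpha)=\bar{\alpha}\}$ (equivalently, to the multifunction assigning to $e$ the nonempty, closed, convex set $\{\mu\in\Dist(\cA\x\cO):(g_e)_*\mu=\nu_e\}$), obtaining a universally measurable choice $e\mapsto\sigma_e$; composing with the already-verified measurability of $e\mapsto\nu_e$ shows $e\mapsto\mu_e=(\sigma_e)_*\nu_e$ is measurable, which is the kernel property. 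The main obstacle is precisely this measurability argument: one must check that the joint graph of $g_\bullet$ is Borel/analytic so the uniformization theorem applies, and one must be careful about the point-set hypotheses (second countability or metrizability of $\cA\x\cO$ and its abstraction) needed to treat these as standard Borel spaces. Note also that since $g_e$ is only assumed Borel rather than continuous, continuous-selection tools such as Michael's theorem are unavailable, so the Borel/analytic uniformization route is the natural one.
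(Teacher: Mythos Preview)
Your approach is correct in spirit but takes a genuinely different route from the paper. The paper (following \cite{panangaden2024policy}) argues functionally-analytically: it defines the sublinear functional $L(\psi)=\max_{\alpha}\psi(\alpha)$ on $C_0(\cA\times\cO)$, defines a linear functional $T$ on the subspace $S=\{\eta\circ g_e:\eta\in C_0(\overline{\cA}\times\overline{\cO})\}$ via integration against $\overline{\pi}(\cdot\mid f(e))$, extends $T$ to all of $C_0(\cA\times\cO)$ by Hahn--Banach, verifies positivity, and then invokes the Riesz Representation theorem to obtain the measure $\mu=\pi^\uparrow(\cdot\mid e)$. Your construction instead produces a (universally) measurable section $\sigma_e$ of $g_e$ and pushes $\nu_e$ forward through it.

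Two comparative remarks. First, the Hahn--Banach/Riesz route works under exactly the stated hypotheses (locally compact Hausdorff), whereas your selection argument needs the extra standard-Borel structure you flag (second countability), so the paper's argument is strictly more general at the pointwise step. Your claim that your argument is ``the exact analogue'' of \cite{panangaden2024policy} is therefore off: that paper also uses Hahn--Banach and Riesz, not measurable selection. Second, and in your favor, you correctly isolate the joint-measurability-in-$e$ issue needed for $\pi^\uparrow$ to be a genuine transition kernel; the paper's proof is purely pointwise and does not address this, and Definition~\ref{def:HiT-MDP-liftedpolicy} does not explicitly demand kernel measurability. If one does want a measurable kernel, your parametrized-selection strategy is a reasonable way to proceed, though note that the Hahn--Banach step in the paper's argument is nonconstructive and would face the same obstacle.
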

\begin{proof}
The difference between our proposition and the one in \cite{panangaden2024policy} is here we only require the spaces $\mathcal{A}$, $\mathcal{O}$ etc., to be Hausdorff instead of a metric space, but these changes do not affect the application of Hahn-Banach theorem and Riesz Representation theorem.

As follows: similarly we could define a functional $L: C_0(\cA \x \cO) \to \mathbb{R}$ as: 
$$L(\psi) = \max_{a \in \cA \x \cO} \psi(a)$$. 

We can prove $L$ is a semi-norm. Now $g_e$ is surjective, we define $S := \{\eta \circ g_e : \eta \in C_0(\overline{\cA} \x \overline{\cO})\} \subset C_0(\cA \x \cO)$. Let $T$ be the linear functional on $S$ defined as:

$$T(\eta \circ g_e)  = \int_{b \in \overline{\cA} \x \overline{\cO}} \eta(b) \overline{\pi}(db | f(s)).$$

We can show $T$ is bounded by $L$ on $S \subset C_0(\cA \x \cO)$. By the Hahn-Banach theorem, we can extend $T$ to a linear functional $\overline{T}$ and $\overline{T}$ is still bounded by $L$ on $C_0(\cA \x \cO)$. Thus, $\overline{T}$ can be shown to be a positive linear functional. With our assumption that $\cA \x \cO$ is a locally compact Hausdorff space, we could employ Riesz Representation theorem and obtain a unique regular Borel measure on the space $\mu$ on $\mathcal{A} \x \mathcal{O}$.

The pushforward measure of $\mu$ with respect to $g_e$ will be $\overline{\pi}(\cdot | f(e))$. Setting $\pi^\uparrow(\cdot | e) = \mu$ gives the result. 
\end{proof}

Now that we have proven a lifted policy exists for continuous setting, we proceed to prove a value equivalence result for continuous HiT-MDP homomorphisms. 

\begin{theorem}[HiT-MDP Value Equivalence]
\label{thm:value_equivalence}
 Let $\overline{\mathcal{M}} = (\overline{\mathcal{S}}, \overline{E}_{\overline{\mathcal{O}}}, \overline{\mathcal{B}}, \overline{\mathcal{A}} \x \overline{\mathcal{O}}, \overline{\tau}_{(\overline{a}, \overline{o})}, \overline{R}, \overline{\gamma})$ be the image of a continuous MDP homomorphism $h = (f, g_s)$ from $\mathcal{M} = (\mathcal{S}, E_\mathcal{O}, \mathcal{B}, \mathcal{A} \x \mathcal{O}, \tau_{(a, o)}, R, \gamma)$, and let $\pi^\uparrow$ be a lifted policy corresponding to $\overline{\pi}$. Then for any $(e, a, o) \in E_\mathcal{O} \times \mathcal{A} \times \mathcal{O}$ we have:
$$
    Q^{\pi^\uparrow}(e, a, o) = \overline{Q}^{\overline{\pi}}(f(e), g_e(a, o)),
$$
where $Q^{\pi^\uparrow}(e, a, o)$ and $\overline{Q}^{\overline{\pi}}(f(e), g_e(a, o))$ are the action-value functions for policies $\pi^\uparrow$ and $\overline{\pi}$ respectively.
\end{theorem}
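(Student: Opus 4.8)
The plan is to run the same inductive argument used in the proof of Theorem~\ref{thm:opt_continuous_HiT-MDP}, but with the supremum over abstract actions --- which there was discharged by surjectivity of $g_e$ --- replaced by an expectation over the lifted policy, which here is discharged by the defining pushforward property of policy lifting in Definition~\ref{def:HiT-MDP-liftedpolicy}. As before, since $E_{\mathcal{O}}$ is a vector bundle we work locally, writing a full system state as $e_t = (s_t, o_{t-1})$ and using $e_t$ and $(s_t, o_{t-1})$ interchangeably, and we invoke the conditional independencies of~\cite{li2023hit} to reduce $Q^{\pi^\uparrow}(e_t, a_t, o_t) = Q^{\pi^\uparrow}(s_t, o_{t-1}, a_t, o_t) = Q^{\pi^\uparrow}(s_t, a_t, o_t)$, and analogously for $\overline{Q}^{\overline{\pi}}$ on $\overline{\mathcal{M}}$.

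First I would introduce the iterative policy-evaluation sequences $\{Q^{\pi^\uparrow}_m\}$ on $\mathcal{M}$ and $\{\overline{Q}^{\overline{\pi}}_m\}$ on $\overline{\mathcal{M}}$, defined exactly as the recursion in the Claim inside the proof of Theorem~\ref{thm:opt_continuous_HiT-MDP} but with the inner supremum replaced by the expectation $\E_{(a_{t+1},o_{t+1})\sim \pi^\uparrow(\cdot\mid e_{t+1})}[\,\cdot\,]$ (resp.\ $\E_{\overline{\pi}}$), with $Q^{\pi^\uparrow}_0 = R$ and $\overline{Q}^{\overline{\pi}}_0 = \overline{R}$. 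The target is the inductive claim $Q^{\pi^\uparrow}_m(s_t, o_{t-1}, a_t, o_t) = \overline{Q}^{\overline{\pi}}_m\bigl(f(s_t, o_{t-1}), g_{e_t}(a_t, o_t)\bigr)$ for all $m$. The base case $m = 0$ is immediate from reward invariance (item~1 of Definition~\ref{def: Continuous HiT-MDP Homomorphism}), verbatim as in Theorem~\ref{thm:opt_continuous_HiT-MDP}.

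For the inductive step I would carry out three moves in order. (i) Apply the inductive hypothesis inside the expectation to rewrite $Q^{\pi^\uparrow}_{m-1}$ as $\overline{Q}^{\overline{\pi}}_{m-1}\circ (f, g_{e_{t+1}})$. (ii) Invoke the policy-lifting identity~(\ref{eq:policy_lifting}): since $\overline{\pi}(\cdot\mid f(e_{t+1}))$ is the pushforward of $\pi^\uparrow(\cdot\mid e_{t+1})$ under $g_{e_{t+1}}$, the change-of-variables formula converts $\E_{(a_{t+1},o_{t+1})\sim \pi^\uparrow(\cdot\mid e_{t+1})}\bigl[\overline{Q}^{\overline{\pi}}_{m-1}(f(e_{t+1}), g_{e_{t+1}}(a_{t+1},o_{t+1}))\bigr]$ into $\E_{(\bar a_{t+1},\bar o_{t+1})\sim \overline{\pi}(\cdot\mid f(e_{t+1}))}\bigl[\overline{Q}^{\overline{\pi}}_{m-1}(f(e_{t+1}), \bar a_{t+1}, \bar o_{t+1})\bigr]$ --- this is the replacement for the surjectivity step~(\ref{surjective_part}) of the optimal-value proof. (iii) Apply transition equivariance (item~2 of Definition~\ref{def: Continuous HiT-MDP Homomorphism}) together with the bundle-map property of $f$ to push the outer integral over $e_{t+1}$ forward under $f$, at which point the expression is by definition $\overline{Q}^{\overline{\pi}}_m\bigl(f(e_t), g_{e_t}(a_t,o_t)\bigr)$ --- the analogue of step~(\ref{apply_CVF_Q}). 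Finally, since iterative policy evaluation converges, $Q^{\pi^\uparrow}_m \to Q^{\pi^\uparrow}$ and $\overline{Q}^{\overline{\pi}}_m \to \overline{Q}^{\overline{\pi}}$ pointwise; passing to the limit and applying the conditional-independence reduction gives $Q^{\pi^\uparrow}(e,a,o) = \overline{Q}^{\overline{\pi}}(f(e), g_e(a,o))$.

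The main obstacle I expect is the measurability bookkeeping in the two nested changes of variables, each using a different piece of the homomorphism. The inner change of variables uses the $e$-dependent family $\{g_e\}$ and the lifting property at the varying base point $e_{t+1}$, so one must ensure the family $\{g_e\}$ is jointly Borel-measurable enough that the nested integral is well defined and that the abstract integrand $e_{t+1} \mapsto \E_{\overline{\pi}(\cdot\mid f(e_{t+1}))}\bigl[\overline{Q}^{\overline{\pi}}_{m-1}(f(e_{t+1}),\cdot)\bigr]$ factors through $f$, i.e.\ equals $\bar\varphi \circ f$ for a Borel function $\bar\varphi$ on $\overline{E}_{\overline{\mathcal{O}}}$ --- this is exactly the form required so that transition equivariance applies in move~(iii), just as in the proof of Theorem~\ref{thm:opt_continuous_HiT-MDP}. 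Boundedness and nonnegativity of the $Q_m$ under the standing assumptions remove any integrability subtleties, so the argument is structurally identical to the optimal-value case with $\sup$ replaced by $\E_{\pi^\uparrow}$ and surjectivity replaced by policy lifting.
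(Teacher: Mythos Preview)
Your proposal is correct and follows essentially the same approach as the paper: an induction on the policy-evaluation iterates in which the inductive hypothesis is applied inside the integrand, the policy-lifting pushforward identity~(\ref{eq:policy_lifting}) is used to change variables in the inner action integral (in place of surjectivity), and transition equivariance is used to change variables in the outer state integral, followed by passing to the limit. The only cosmetic difference is that the paper initializes $Q_0^{\pi^\uparrow}=0$ rather than $Q_0^{\pi^\uparrow}=R$, which does not affect the argument.
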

\begin{proof}
    Similarly as in Theorem~\ref{thm:opt_continuous_HiT-MDP}, we define the sequence $Q^{\pi^\uparrow}_m: \mathcal{S} \times \mathcal{A} \to \R$ as:
    
\begin{align*}
    Q^{\pi^\uparrow}_m(s_t, a_t, o_t) = &R(s_t, a_t) + \\
    &\gamma \int_{s_{t+1}} P(ds_{t+1}|s_t, a_t)\int_{a_{t+1},o_{t+1}} P(da_{t+1},do_{t+1}|s_{t+1}, o_t) Q^{\pi^\uparrow}_{m-1}(s_{t+1}, a_{t+1}, o_{t+1})
\end{align*}

    for $m \geq 1$ and $Q^{\pi^\uparrow}_0(s,a,o) = 0$. Analogously define $\overline{Q}_{m-1}^{\overline{\pi}}: \overline{\mathcal{S}} \times \overline{\mathcal{A}} \x \overline{\mathcal{O}}  \to \R$. For the inductive case, we can perform change of variables twice to change the domain of integration from $S$ to $\overline{\mathcal{S}}$ and $\mathcal{A} \x \cO $ to $\overline{\mathcal{A}\x \cO}$ respectively:
    \begin{align*}
        Q^{\pi^\uparrow}_m(s,a,o) &= R(s_t, a_t) + \\
    &\gamma \int_{s_{t+1}} P(ds_{t+1}|s_t, a_t)\int_{a_{t+1},o_{t+1}} P(da_{t+1},do_{t+1}|s_{t+1}, o_t) Q^{\pi^\uparrow}_{m-1}(s_{t+1}, a_{t+1}, o_{t+1}) \\
    =&R(s_t, a_t) + \\
    &\gamma \int_{s_{t+1}} P(ds_{t+1}|s_t, a_t)\int_{a_{t+1},o_{t+1}} P(da_{t+1},do_{t+1}|s_{t+1}, o_t) Q^{\bar{\pi}}_{m-1}(f(s_{t+1},o_t), g(a_{t+1}, o_{t+1})) \\
    =&\bar{R}(\bar{s}_t, \bar{a}_t) + \gamma \int_{s_{t+1}} P(ds_{t+1}|s_t, a_t)\int_{\bar{a}_{t+1},\bar{o}_{t+1}} \bar{\pi} Q^{\bar{\pi}}_{m-1}(f(s_{t+1},o_t), g(a_{t+1}, o_{t+1})) \\
    = & \overline{Q}_{m-1}^{\overline{\pi}}(f(e), g_e(a,o)).
    \end{align*}
    In a similar manner to Theorem~\ref{thm:opt_continuous_HiT-MDP}, we conclude that  $Q^{\pi^\uparrow}(e, a, o) = \overline{Q}^{\overline{\pi}}(f(e), g_e(a, o))$.
\end{proof}

Thus, we have recovered all desirable properties for continuous HiT-MDP homomorphisms corresponding to continuous MDP homomorphisms in \cite{panangaden2024policy}.

\section{Variational Inference on Abstract HiT-MDPs}
\label{sec:var_abstract_hitmdp}

In Section~\ref{sec:meth}, we formulated the learning problem in the original HiT-MDP $\mathcal{M}$ as a variational inference problem, yielding an evidence lower bound (ELBO) whose optimization leads to the VMOC algorithm. In Section~\ref{sec:cont_hitmdp_homomorphism}, we established that a continuous HiT-MDP homomorphism $h=(f, g_e)$ preserves the optimal value function between the original MDP $\mathcal{M}$ and an abstract MDP $\overline{\mathcal{M}}$. In this section, we bridge these two concepts by deriving the ELBO for the abstract HiT-MDP, $\overline{\gL}$, and proving that maximizing $\overline{\gL}$ with respect to the abstract policy $\overline{\pi}$ is equivalent to maximizing the original ELBO $\gL$ with respect to the corresponding lifted policy $\pi^\uparrow$. This provides a principled justification for using VMOC to optimize policies in the abstract space, as it directly optimizes the variational objective in the original, more complex space.

\subsection{The Evidence Lower Bound for Abstract HiT-MDPs}

We begin by defining the control-as-inference problem for the abstract MDP $\overline{\mathcal{M}} = (\overline{\mathcal{S}}, \overline{E}_{\overline{{\mathcal{O}}}}, \overline{\mathcal{B}}, \overline{\mathcal{A}} \x \overline{\mathcal{O}}, \overline{\tau}, \overline{R}, \gamma)$. Let $\overline{e}_t = f(e_t)$ be the abstract state and $\overline{\alpha}_t = g_{e_t}(\alpha_t)$ be the abstract action, where $e_t = (s_t, o_{t-1})$ and $\alpha_t = (a_t, o_t)$. The abstract trajectory is denoted by $\overline{\tau} = (\overline{e}_0, \overline{\alpha}_0, \overline{e}_1, \overline{\alpha}_1, \ldots)$.

We introduce abstract optimality variables $\overline{\gE}^A$ and $\overline{\gE}^O$. The joint probability over an abstract trajectory $\overline{\tau}$ and the optimality variables is given by:
\begin{equation}
    P(\overline{\tau}, \overline{\gE}^A_{1:T}, \overline{\gE}^O_{1:T}) \propto P(\overline{e}_0) \prod_{t=1}^T P(\overline{e}_{t+1} | \overline{e}_t, \overline{\alpha}_t) P(\overline{\gE}^A_t | \overline{e}_t, \overline{\alpha}_t) P(\overline{\gE}^O_t | \overline{e}_t, \overline{\alpha}_t)
\end{equation}
where the likelihood of optimality is defined by the abstract reward function $\overline{R}$ and an abstract regularizer $\overline{f}$, consistent with the original MDP:
\begin{align}
    P(\overline{\gE}^A_t | \overline{e}_t, \overline{\alpha}_t) &= \exp(\overline{R}(\overline{e}_t, \overline{\alpha}_t)) \\
    P(\overline{\gE}^O_t | \overline{e}_t, \overline{\alpha}_t) &= \exp(\overline{f}(\overline{e}_t, \overline{\alpha}_t))
\end{align}
The variational distribution to approximate the true posterior $P(\overline{\tau}|\overline{\gE}^A_{1:T}, \overline{\gE}^O_{1:T})$ is defined by an abstract policy $\overline{q}(\overline{\alpha}_t | \overline{e}_t)$:
\begin{equation}
    \overline{q}(\overline{\tau}) = P(\overline{e}_0) \prod_{t=1}^T P(\overline{e}_{t+1} | \overline{e}_t, \overline{\alpha}_t) \overline{q}(\overline{\alpha}_t | \overline{e}_t)
\end{equation}
The ELBO for the abstract MDP, denoted $\overline{\gL}$, is derived by maximizing the log-likelihood $\log P(\overline{\gE}^A_{1:T}, \overline{\gE}^O_{1:T})$:
\begin{align}
    \overline{\gL}(\overline{q}(\overline{\tau})) &= \E_{\overline{q}(\overline{\tau})} \left[ \log P(\overline{\tau}, \overline{\gE}^A_{1:T}, \overline{\gE}^O_{1:T}) - \log \overline{q}(\overline{\tau}) \right] \\
    &= \E_{\overline{q}(\overline{\tau})} \left[ \sum_{t=1}^T \left( \overline{R}(\overline{e}_t, \overline{\alpha}_t) + \overline{f}(\overline{e}_t, \overline{\alpha}_t) - \log \overline{q}(\overline{\alpha}_t | \overline{e}_t) \right) \right] \\
    &= \E_{\overline{q}(\overline{\tau})} \left[ \sum_{t=1}^T \left( \overline{R}(\overline{e}_t, \overline{\alpha}_t) + \overline{f}(\overline{e}_t, \overline{\alpha}_t) + \gH(\overline{q}(\cdot | \overline{e}_t)) \right) \right]
\end{align}
This abstract ELBO, $\overline{\gL}$, provides the objective function for the VMOC algorithm when applied to the abstract MDP $\overline{\mathcal{M}}$.

\subsection{Optimizing Variational Objectives}

We now show that optimizing the abstract ELBO $\overline{\gL}$ with an abstract policy $\overline{q}$ is equivalent to optimizing the original ELBO $\gL$ with the corresponding lifted policy $q^\uparrow$.

\begin{theorem}[ELBO Equivalence under Homomorphism]
Let $h=(f, g_e): \mathcal{M} \to \overline{\mathcal{M}}$ be a
continuous HiT-MDP homomorphism. Let $\overline{q}$ be a policy
on $\overline{\mathcal{M}}$ and $q^\uparrow$ be its lifted policy
on $\mathcal{M}$ as per
Definition~\ref{def:HiT-MDP-liftedpolicy}. Then the ELBO of
the abstract policy is a lower bound of the lifted policy:
\begin{equation}
    \gL(q^\uparrow) \ge \overline{\gL}(\overline{q})
\end{equation}
\end{theorem}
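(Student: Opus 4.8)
The plan is to compare the two evidence lower bounds term by term and to isolate a single non-negative surplus. First I would expand both objectives into the reward/regularizer/entropy form already derived in the excerpt: $\mathcal{L}(q^\uparrow) = \E_{q^\uparrow(\tau)}\big[\sum_{t} R(e_t,\alpha_t) + f(e_t,\alpha_t) + \mathcal{H}(q^\uparrow(\cdot\mid e_t))\big]$ and $\overline{\mathcal{L}}(\overline{q}) = \E_{\overline{q}(\overline{\tau})}\big[\sum_{t} \overline{R}(\overline{e}_t,\overline{\alpha}_t) + \overline{f}(\overline{e}_t,\overline{\alpha}_t) + \mathcal{H}(\overline{q}(\cdot\mid \overline{e}_t))\big]$, where $\overline{e}_t = f(e_t)$ and $\overline{\alpha}_t = g_{e_t}(\alpha_t)$, and where I use the joint entropy $\mathcal{H}(q^\uparrow(\cdot\mid e_t)) = -\E[\log q^\uparrow(\alpha_t\mid e_t)] = \mathcal{H}[\pi^A] + \mathcal{H}[\pi^O]$ so that the option and action entropies are handled together.

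The second step would establish that $\overline{q}(\overline{\tau})$ is exactly the pushforward of $q^\uparrow(\tau)$ under the coordinatewise map $(e_t,\alpha_t)\mapsto(f(e_t),g_{e_t}(\alpha_t))$. This follows by induction on $t$, marginal by marginal, from the compatibility of the initial distributions under $f$, the transition equivariance of Definition~\ref{def: Continuous HiT-MDP Homomorphism}, and the defining pushforward property of the lifted policy in Definition~\ref{def:HiT-MDP-liftedpolicy}; structurally it is the same change-of-variables computation used in the proofs of Theorem~\ref{thm:opt_continuous_HiT-MDP} and Theorem~\ref{thm:value_equivalence}. Combining this with reward invariance, $R(e_t,\alpha_t) = \overline{R}(f(e_t),g_{e_t}(\alpha_t))$, and the consistency of the regularizer, $\overline{f}(f(e_t),g_{e_t}(\alpha_t)) = f(e_t,\alpha_t)$, the reward-plus-regularizer parts of the two ELBOs become numerically equal. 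Hence the entire difference collapses to the entropy gap, $\mathcal{L}(q^\uparrow) - \overline{\mathcal{L}}(\overline{q}) = \E_{q^\uparrow(\tau)}\big[\sum_{t} \big(\mathcal{H}(q^\uparrow(\cdot\mid e_t)) - \mathcal{H}(\overline{q}(\cdot\mid f(e_t)))\big)\big]$, where the outer expectation over $\overline{e}_t$ has been rewritten as an expectation over $e_t$ via the pushforward relation.

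The third and decisive step is to show each summand is non-negative: for every $e$, $\mathcal{H}(q^\uparrow(\cdot\mid e)) \ge \mathcal{H}(\overline{q}(\cdot\mid f(e)))$, where $\overline{q}(\cdot\mid f(e))$ is the pushforward of $q^\uparrow(\cdot\mid e)$ under the measurable surjection $g_e$. This is the data-processing inequality for entropy under a deterministic, generally many-to-one map: by the chain rule, $\mathcal{H}(q^\uparrow(\cdot\mid e)) = \mathcal{H}(\overline{q}(\cdot\mid f(e))) + \mathcal{H}\big(\alpha \mid g_e(\alpha)\big)$ under the law $q^\uparrow(\cdot\mid e)$, and the surplus conditional entropy is non-negative. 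Equivalently and more concretely, because $g_e$ only coarsens the discrete option coordinate while acting injectively on the continuous action coordinate within each fibre, one has pointwise $\overline{q}(g_e(\alpha)\mid f(e)) \ge q^\uparrow(\alpha\mid e)$, so $\E_{\alpha\sim q^\uparrow(\cdot\mid e)}\big[\log\overline{q}(g_e(\alpha)\mid f(e)) - \log q^\uparrow(\alpha\mid e)\big] \ge 0$, which is exactly the surplus. Summing over $t$ and taking the outer expectation yields $\mathcal{L}(q^\uparrow) \ge \overline{\mathcal{L}}(\overline{q})$, with equality iff each $g_e$ is essentially injective.

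I expect the main obstacle to be making step three fully rigorous in the mixed discrete–continuous setting, since differential entropy is not monotone under arbitrary measurable maps (rescaling alone changes it); the argument must exploit the particular structure of $g_e$ — a discrete coarsening on options composed with an injection on actions — or be recast in terms of relative entropies against compatible reference measures on $\mathcal{A}\x\mathcal{O}$ and $\overline{\mathcal{A}}\x\overline{\mathcal{O}}$ so that the surplus is manifestly a non-negative conditional relative entropy. Steps one and two are essentially bookkeeping that reuses the pushforward and change-of-variables machinery already developed for the value-equivalence theorems.
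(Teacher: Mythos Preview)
Your proposal is correct and follows essentially the same route as the paper: show the reward and regularizer contributions match via invariance plus the pushforward/change-of-variables relation, then decompose the entropy of the lifted policy with the chain rule so that the difference $\mathcal{L}(q^\uparrow)-\overline{\mathcal{L}}(\overline{q})$ is exactly the expected conditional entropy $\mathcal{H}(\alpha_t\mid g_{e_t}(\alpha_t))$, which is non-negative. Your caveat about step three---that differential entropy is not monotone under arbitrary measurable maps, so one must exploit the discrete-coarsening structure of $g_e$ or work with relative entropies against compatible reference measures---is a genuine technical point that the paper's proof simply glosses over; you are being more careful there than the paper itself.
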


\begin{proof}
The ELBO for the original MDP $\mathcal{M}$ under the lifted policy $q^\uparrow$ is:
\begin{equation}
    \gL(q^\uparrow) = \E_{q^\uparrow(\tau)} \left[ \sum_{t=1}^T \left( R(e_t, \alpha_t) + f(e_t, \alpha_t) + \gH(q^\uparrow(\cdot | e_t)) \right) \right]
\end{equation}
We analyze each term in the expectation.

\textbf{Reward Term:} By the reward invariance property of the homomorphism, $R(e_t, \alpha_t) = \overline{R}(f(e_t), g_{e_t}(\alpha_t)) = \overline{R}(\overline{e}_t, \overline{\alpha}_t)$. The same holds for the regularizer $f$. Using the change of variables formula and the definition of the lifted policy, the expectation over the reward transforms as follows:
\begin{align}
    \E_{q^\uparrow(\tau)}[R(e_t, \alpha_t)] &= \int_{e_t, \alpha_t} q^\uparrow(\tau) R(e_t, \alpha_t) \,d\tau \\
    &= \int_{\overline{e}_t, \overline{\alpha}_t} \overline{q}(\overline{\tau}) \overline{R}(\overline{e}_t, \overline{\alpha}_t) \,d\overline{\tau} = \E_{\overline{q}(\overline{\tau})}[\overline{R}(\overline{e}_t, \overline{\alpha}_t)]
\end{align}

\textbf{Entropy Term:} The entropy of the lifted policy
$\gH(q^\uparrow(\cdot|e_t))$ can be decomposed using the chain
rule for entropy. Let $Y=g_{e_t}(\alpha_t)$ be a random variable
representing the abstract action. The entropy of $\alpha_t$ can
be written as:
\begin{equation}
    \gH(q^\uparrow(\alpha_t | e_t)) = \gH(g_{e_t}(\alpha_t) | f(e_t)) + \E_{q^\uparrow(\alpha_t|e_t)}[\gH(q^\uparrow(\alpha_t | e_t, g_{e_t}(\alpha_t)))]
\end{equation}
The first term is the entropy of the abstract policy, since $\overline{q}(\cdot|f(e_t))$ is the pushforward measure of $q^\uparrow(\cdot|e_t)$ under $g_{e_t}$:
\begin{equation}
    \gH(g_{e_t}(\alpha_t) | f(e_t)) = \gH(\overline{q}(\cdot | \overline{e}_t))
\end{equation}
The second term is the expected value of the conditional entropy of selecting a concrete action $\alpha_t$ given that its abstraction is $g_{e_t}(\alpha_t)$.

\textbf{Combining Terms:} Substituting the transformed reward and decomposed entropy back into the expression for $\gL(q^\uparrow)$:
\begin{align}
    \gL(q^\uparrow) &= \E_{\overline{q}(\overline{\tau})} \left[ \sum_{t=1}^T \left( \overline{R}(\overline{e}_t, \overline{\alpha}_t) + \overline{f}(\overline{e}_t, \overline{\alpha}_t) + \gH(\overline{q}(\cdot | \overline{e}_t)) \right) \right] \nonumber \\
    & \quad + \E_{q^\uparrow(\tau)} \left[ \sum_{t=1}^T \E_{q^\uparrow(\alpha_t|e_t)}[\gH(q^\uparrow(\alpha_t | e_t, g_{e_t}(\alpha_t)))] \right] \\
    &= \overline{\gL}(\overline{q}) + \E_{q^\uparrow(\tau)} \left[ \sum_{t=1}^T \gH(q^\uparrow(\alpha_t | e_t, g_{e_t}(\alpha_t))) \right] \\
    & \ge \overline{\gL}(\overline{q})
\end{align}
This completes the proof.
\end{proof}

This theorem reveals that the ELBO in the original MDP decomposes
into two parts: (1) the ELBO of the abstract MDP, and (2) the
expected conditional entropy over concrete actions within an
abstract action's equivalence class. Since entropy is
non-negative, $\gL(q^\uparrow) \geq
\overline{\gL}(\overline{q})$.

Crucially, the conditional entropy term depends only on how the
policy $q^\uparrow$ distributes probability mass among the
concrete actions that map to the same abstract action via
$g_{e_t}$. The optimization of the abstract policy $\overline{q}$
via VMOC directly maximizes the $\overline{\gL}(\overline{q})$
term. This shows that improving the policy in the abstract space
guarantees an improvement in the variational objective of the
original, more complex problem. The value equivalence results
from Section 4.1 further guarantee that the optimal solution is
not lost in this abstraction. Therefore, performing variational
inference with VMOC on the abstract MDP is a valid and principled
approach to solving the original control problem.

\section{Experiments}

\subsection{Efficiency of VMOC on Locomotion Tasks}

In this section, we design experiments on the challenging single
task OpenAI Gym MuJoCo~\cite{brockman2016openai} environments (10
environments) to test Variational Markovian Option Critic
(VMOC)'s performance over other option variants and non-option
baselines.

For VMOC in all environments, we fix the temperature rate for
both $\alpha^O$ and $\alpha^A$ to $0.05$; we add an exploration
noise $\gN(\mu=0,\sigma=0.2)$ during exploration. For all
baselines, we follow DAC~\cite{zhang2019dac}'s open source
implementations and compare our algorithm with six baselines,
five of which are option variants, \textit{i.e.},
MOPG~\cite{li2022hit}, DAC+PPO, AHP+PPO \cite{levy2011unified},
IOPG \cite{smith2018inference}, PPOC
\cite{klissarov2017learnings}, OC \cite{bacon2017option} and PPO
\cite{schulman2017proximal}. All baselines' parameters used by
DAC remain unchanged over 1 million environment steps to
converge. Figures are plotted following DAC's style: curves are
averaged over 10 independent runs and smoothed by a sliding
window of size 20. Shaded regions indicate standard deviations.
All experiments are run on an Intel® Core™ i9-9900X CPU @ 3.50GHz
with a single thread and process. Our implementation details are
summarized in Appendix~\ref{sec:append_imp}. For a fair
comparison, we follow option literature conventions and use four
options in all implementations. Our code is available in
supplemental materials.

We evaluate the performance of VMOC against six option-based
baselines (MOPG~\cite{li2022hit}, DAC+PPO~\cite{zhang2019dac},
AHP+PPO~\cite{levy2011unified}, IOPG~\cite{smith2018inference},
PPOC~\cite{klissarov2017learnings}, and
OC~\cite{bacon2017option}) as well as the hierarchy-free PPO
algorithm~\cite{schulman2017proximal}. Previous
studies~\cite{klissarov2017learnings,smith2018inference,harb2018waiting,zhang2019dac}
have suggested that option-based algorithms do not exhibit
significant advantages over hierarchy-free algorithms in
single-task environments. Nonetheless, our results demonstrate
that VMOC significantly outperforms all baselines in terms of
episodic return, convergence speed, step variance, and variance
across 10 runs, as illustrated in Figure~\ref{fig:all_exp}. The
only exception is the relatively simple InvertedDoublePendulum
environment, which we suspect is due to hyper-parameter tuning
issues and will be addressed in future work.
\begin{figure}[H]
  \centering
  \includegraphics[width=1\linewidth]{./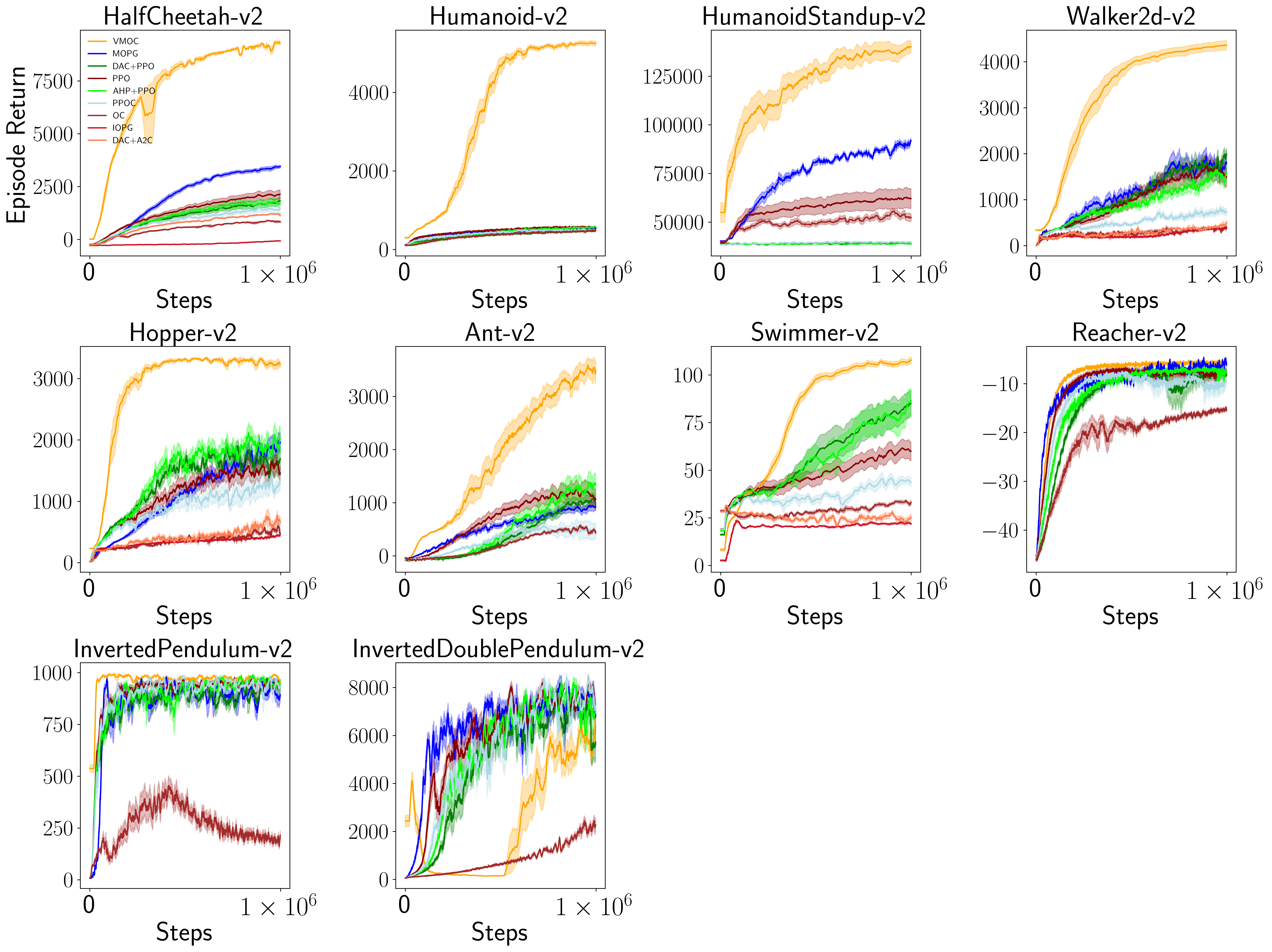}
  \caption{\label{fig:all_exp} Experiments on Mujoco
    Environments. Curves are averaged over 10 independent runs
    with different random seeds and smoothed by a sliding window
    of size 20. Shaded regions indicate standard deviations.}
\end{figure}

Notably, VMOC exhibits superior performance on the Humanoid-v2
and HumanoidStandup-v2 environments. These environments are
characterized by a large state space ($\mathcal{S} \in
\mathbb{R}^{376}$) and action space ($\mathcal{A} \in
\mathbb{R}^{17}$), whereas other environments typically have
state dimensions less than 20 and action dimensions less than 5.
The enhanced performance of VMOC in these environments can be
attributed to its maximum entropy capability: in large
state-action spaces, the agent must maximize rewards while
exploring a diverse set of state-action pairs. Maximum likelihood
methods tend to quickly saturate with early rewarding
observations, leading to the selection of low-entropy options
that converge to local optima.

A particularly relevant comparison is with the Markovian Option
Policy Gradient (MOPG)~\cite{li2022hit}, as both VMOC and MOPG
are developed based on HiT-MDPs and employ option embeddings.
Despite being derived under the maximum entropy framework, MOPG
utilizes an on-policy gradient descent approach. Our experimental
results show that VMOC's performance surpasses that of MOPG,
highlighting the limitations of on-policy methods, which suffer
from shortsighted rollout lengths and quickly saturate to early
high-reward observations. In contrast, VMOC's variational
off-policy approach effectively utilizes the maximum entropy
framework by ensuring better exploration and stability across the
learning process. Additionally, the off-policy nature of VMOC
allows it to reuse samples from a replay buffer, enhancing sample
efficiency and promoting greater diversity in the learned
policies. This capability leads to more robust learning, as the
algorithm can leverage a broader range of experiences to improve
policy optimization.

\subsection{Cold-start VMOC on Language Model Tasks}

To evaluate the effectiveness of initializing option embeddings via supervised fine-tuning, we apply our cold-start procedure (VMOC-SFT) to a suite of mathematical and logical reasoning tasks. We assess its performance on both in-distribution and out-of-distribution (OOD) datasets to measure its reasoning capabilities and generalization robustness.

\subsubsection{Datasets}
Our evaluation leverages six public datasets, split into in-domain and out-of-domain categories for a comprehensive analysis.

\paragraph{In-Domain Datasets.} These datasets are used to evaluate the model's primary reasoning abilities after the cold-start phase.
\begin{itemize}
    \item \textbf{GSM8k-Aug} \cite{deng2023implicit}: A large-scale mathematical reasoning dataset derived from GSM8K \cite{cobbe2021training}. It contains 385K training examples generated by GPT-4, with reasoning steps presented as structured mathematical expressions.
    \item \textbf{GSM8k-Aug-NL}~\cite{deng2023implicit}: A variant of GSM8k-Aug where the intermediate reasoning steps are expressed in natural language, testing the model's ability to process less structured thought processes.
    \item \textbf{CommonSense}~\cite{talmor2018commonsenseqa}: This logical reasoning dataset requires the model to perform multi-hop reasoning over a set of conditions to verify a statement, simulating a simplified theorem-proving environment.
\end{itemize}

\paragraph{Out-of-Domain Datasets.} To test the generalization of the learned reasoning strategies, we evaluate the model trained on GSM8k-Aug on three OOD datasets without further fine-tuning.
\begin{itemize}
    \item \textbf{SVAMP} \cite{patel-etal-2021-nlp}: A collection of elementary-level math word problems that require understanding short narratives.
    \item \textbf{GSM-HARD} \cite{gao2023pal}: A more challenging version of the GSM8K test set, featuring an increased range of numerical values to test for robustness.
    \item \textbf{MultiArith} \cite{roy-roth-2015-solving}: A dataset composed of multi-step arithmetic word problems, requiring the model to sequence multiple calculations correctly.
\end{itemize}

\subsubsection{Experimental Setup}
We implement our cold-start training on the LLaMA3.2-1b
\cite{grattafiori2024llama}. The training process utilizes the
AdamW optimizer with a learning rate of 5e-5, a 10\% warm-up
phase, and subsequent linear decay. For the VMOC-SFT specific
hyperparameters, we use a latent space with 6 distinct option
embeddings. The KL divergence weight, $\beta$, is set to 0.1 to
balance the reconstruction loss with the regularization of the
latent space. All experiments were conducted on an Amazon EC2
ml.p5d.24xlarge instance, equipped with 8 NVIDIA H200 GPUs.

\subsubsection{Baselines}
We compare VMOC-SFT against several strong baselines that represent different approaches to incorporating reasoning in language models.
\begin{itemize}
    \item \textbf{CoT-SFT}: A standard supervised fine-tuning approach where the model is explicitly trained on prompts, chain-of-thought reasoning, and final answers.
    \item \textbf{No-CoT-SFT}: The model is fine-tuned only on prompt-answer pairs, without exposure to intermediate reasoning steps.
    \item \textbf{Pause-CoT-SFT} \cite{goyal2023think}: A variant of No-CoT-SFT where special `<pause>` tokens are inserted between the prompt and answer during training, hypothesized to give the model additional computational capacity for implicit reasoning.
    \item \textbf{iCoT} \cite{deng2024explicit}: An "internalization" strategy where the explicit CoT is gradually removed during training, forcing the model to reason implicitly.
    \item \textbf{COCONUT} \cite{coconut}: A method that also internalizes reasoning, but instead of deleting the CoT, it replaces it with latent codes for autoregressive generation.
    \item \textbf{CODI} \cite{codi}: A distillation-based approach that jointly trains an explicit CoT task and an implicit reasoning task, aligning hidden states to transfer reasoning abilities into a continuous latent space.
\end{itemize}

\subsubsection{Results and Analysis}

The performance of VMOC-SFT against all baselines is presented in Table \ref{tab:model_comparison}. Our method demonstrates a distinct performance profile, excelling in tasks that require abstract reasoning and robustness to increased difficulty.

\begin{table}[h!]
\centering
\caption{Comparison of different models on In-Sample and Out-of-Distribution (OOD) datasets. OOD models were trained on GSM8k-NL. Best overall results are in bold, while best results for our method are in bold red.}
\label{tab:model_comparison}
\resizebox{\textwidth}{!}{%
\begin{tabular}{l ccc ccc}
\toprule
& \multicolumn{3}{c}{\textbf{In Sample}} & \multicolumn{3}{c}{\textbf{OOD (Trained On GSM8k-NL)}} \\
\cmidrule(lr){2-4} \cmidrule(lr){5-7}
\textbf{Model} & \textbf{GSM8k-Aug} & \textbf{GSM8k-Aug-NL} & \textbf{CommonSense} & \textbf{SVAMP} & \textbf{GSM-HARD} & \textbf{MultiArith} \\
\midrule
CoT-SFT         & 60.8 & 55.1 & 68.2 & 66.7 & 15.8 & \textbf{99.3} \\
CODI            & \textbf{55.6} & \textbf{49.7} & 74.0 & \textbf{61.1} & 12.8 & 96.1 \\
\textcolor{red}{VMOC-SFT} & 37.5 & 43.8 & \textcolor{red}{\textbf{78.1}} & 31.3 & \textcolor{red}{\textbf{15.6}} & 62.5 \\
Coconut         & 45.3 & 27.2 & 60.6 & 48.8 & 9.9  & 90.1 \\
No-CoT-SFT      & 28.7 & 28.7 & 74.9 & 44.1 & 6.8  & 69.2 \\
Pause-CoT-SFT   & 28.1 & 28.1 & -    & 41.2 & 6.7  & 65.3 \\
iCoT            & 19.2 & 15.1 & 72.6 & 40.9 & 5.2  & 45.8 \\
\bottomrule
\end{tabular}
}
\end{table}

On the in-sample datasets, VMOC-SFT's performance on the GSM8k
variants is lower than that of explicit CoT-based methods like
CoT-SFT and CODI. This is expected, as our variational approach
optimizes for a compressed latent representation of reasoning
rather than direct imitation of the reasoning text. However, this
trade-off proves highly beneficial on the \textbf{CommonSense}
dataset, where VMOC-SFT achieves a state-of-the-art score of
\textbf{78.1}. This suggests that our method's ability to learn a
discrete latent space of reasoning "options" is particularly
effective for capturing the abstract, multi-hop logical
structures in this task, which are less amenable to simple
textual imitation.

This advantage in handling complexity is further validated on the
OOD benchmarks. While VMOC-SFT does not top the charts on SVAMP
or MultiArith, it achieves the best score on \textbf{GSM-HARD}
with \textbf{15.6}. This result is significant, as it indicates
that the reasoning strategies encoded in our latent options are
more robust and generalize better to problems of increased
difficulty within the same domain. While other models may overfit
to the patterns in the training data, our approach learns a more
fundamental set of reasoning primitives, allowing it to adapt
more effectively when faced with harder problems. This highlights
the value of the cold-start procedure in equipping the model with
a resilient and abstract reasoning framework before it is further
refined through reinforcement learning.

\section{Related Work}

The VMOC incorporates three key ingredients: the option
framework, a structural variational inference based off-policy
algorithm and latent variable policies. We review prior works
that draw on some of these ideas in this section. The options
framework~\cite{sutton1999between} offers a promising approach
for discovering and reusing temporal abstractions, with options
representing temporally abstract skills. Conventional option
frameworks~\cite{precup2000temporal}, typically developed under
the maximum likelihood (MLE) framework with few constraints on
options behavior, often suffer from the option degradation
problem~\cite{levy2011unified,bacon2017option}. This problem
occurs when options quickly saturate with early rewarding
observations, causing a single option to dominate the entire
policy, or when options switch every timestep, maximizing policy
at the expense of skill reuse across tasks. On-policy option
learning algorithms~\cite{bacon2017option, bacon2018temporal,
  zhang2019dac, li2020skill, li2022hit} aim to maximize expected
return by adjusting policy parameters to increase the likelihood
of high-reward option trajectories, which often leads to focusing
on low-entropy options. Several techniques~\cite{harb2018waiting,
  harutyunyan2019termination, kamat2020diversity} have been
proposed to enhance on-policy algorithms with entropy-like
extrinsic rewards as regularizers, but these often result in
biased optimal trajectories. In contrast, the maximum entropy
term in VMOC arises naturally within the variational framework
and provably converges to the optimal trajectory.

Although several off-policy option learning algorithms have been
proposed~\cite{daniel2016probabilistic,shiarlis2018taco,smith2018inference,
  wulfmeier2020data}, these typically focus on improving sample
efficiency by leveraging the control as inference framework.
Recent works~\cite{smith2018inference} aim to enhance sample
efficiency by inferring and marginalizing over options, allowing
all options to be learned simultaneously.
\citename{wulfmeier2020data} propose off-policy learning of all
options across every experience in hindsight, further boosting
sample efficiency. However, these approaches generally lack
constraints on options behavior. A closely related
work~\cite{li2020soac} also derives a variational approach under
the option framework; however, it is based on probabilistic
graphical model that we believe are incorrect, potentially
leading to convergence issues. Additionally, our algorithm
enables learning options as latent embeddings, a feature not
present in their approach.

Recently, several studies have extended the maximum entropy
reinforcement learning framework to discover skills by
incorporating additional latent variables. One class of
methods~\cite{hausman2018learning,gupta2019relay} maintains
latent variables constant over the duration of an episode,
providing a time-correlated exploration signal. Other
works~\cite{haarnoja2018soft, zhang2022latent} focus on
discovering multi-level action abstractions that are suitable for
repurposing by promoting skill distinguishability, but they do
not incorporate temporal abstractions. Studies such
as~\cite{pertsch2020long, ajay2020opal, co2018self} aim to
discover temporally abstract skills essential for exploration,
but they predefine their temporal resolution. In contrast, VMOC
learns temporal abstractions as embeddings in an end-to-end
data-driven approach with minimal prior knowledge encoded in the
framework.

State abstraction is a cornerstone of scalable reinforcement
learning. A formal and powerful tool for this is the MDP
homomorphism, which defines an abstraction map between a large,
complex MDP and a smaller, simpler one while preserving key
structural properties. Foundational work established these
concepts for discrete state and action spaces. More recently,
\citet{panangaden2024policy} extended the theory to standard
continuous MDPs, providing guarantees for optimal value
equivalence and policy lifting under much broader conditions .
Our work builds directly on this, making a novel contribution by
extending the continuous homomorphism framework to HiT-MDPs. By
employing the mathematical structure of vector bundles , we
formally define abstractions over the joint state-option space,
proving that optimality is preserved. This provides a rigorous
theoretical justification for learning in abstract hierarchical
models like VMOC.

hain-of-Thought (CoT) prompting dramatically improved the
ability of LLMs to solve complex reasoning tasks by generating
explicit intermediate steps~\cite{wei2022chain}. However, the
computational overhead of generating these steps has spurred
research into more efficient methods . A key direction is
``latent reasoning,'' where the intermediate computations are
performed within the model's hidden states rather than being
decoded into text . Some approaches aim to achieve this
implicitly by modifying the training process, for instance by
removing the explicit CoT during
fine-tuning~\cite{deng2024explicit} or inserting special tokens
to encourage latent computation~\cite{goyal2023think}. Other
works focus on explicitly engineering a latent workspace, for
instance by distilling explicit CoT into continuous hidden
states~\cite{shen2025codi} or learning to replace reasoning text
with latent codes~\cite{coconut}. Our work contributes a new
perspective by proposing the discrete option space of the HiT-MDP
framework as a structured medium for latent reasoning. By using a
variational objective during a cold-start SFT phase, we learn to
map explicit reasoning demonstrations to a vocabulary of discrete
option embeddings, creating a model that ``thinks'' by composing
these learned latent primitives.

\section{Conclusion}

In this paper, we introduced the Variational Markovian Option
Critic (VMOC), a novel off-policy algorithm that addresses key
challenges in hierarchical reinforcement learning. By integrating
a variational inference framework on option-induced HiT-MDPs,
VMOC naturally promotes the discovery of diverse and effective
options through maximum entropy intrinsic rewards. To ground our
framework theoretically, we extended the theory of continuous MDP
homomorphisms to the HiT-MDP setting, proving that learning in a
homomorphically abstract option space preserves optimal value
functions and provides a principled path to scalable HRL.
Furthermore, we demonstrated the versatility of our approach by
proposing a cold-start training procedure for Large Language
Models. This method distills explicit Chain-of-Thought
demonstrations into the latent option space, enabling efficient
and implicit reasoning. Extensive experiments show that VMOC
significantly outperforms strong baselines in complex locomotion
tasks and achieves excellent results on logical reasoning
benchmarks, validating its effectiveness and scalability for both
control and language.

\bibliographystyle{icml2021}

\begin{thebibliography}{70}
\providecommand{\natexlab}[1]{#1}
\providecommand{\url}[1]{\texttt{#1}}
\expandafter\ifx\csname urlstyle\endcsname\relax
  \providecommand{\doi}[1]{doi: #1}\else
  \providecommand{\doi}{doi: \begingroup \urlstyle{rm}\Url}\fi

\bibitem[Ajay et~al.(2020)Ajay, Kumar, Agrawal, Levine, and Nachum]{ajay2020opal}
Ajay, A., Kumar, A., Agrawal, P., Levine, S., and Nachum, O.
\newblock Opal: Offline primitive discovery for accelerating offline reinforcement learning.
\newblock \emph{arXiv preprint arXiv:2010.13611}, 2020.

\bibitem[Araujo \& Grupen(1996)Araujo and Grupen]{araujo1996learning}
Araujo, E.~G. and Grupen, R.~A.
\newblock Learning control composition in a complex environment.
\newblock In \emph{Proceedings of the Fourth International Conference on Simulation of Adaptive Behavior}, pp.\  333--342, 1996.

\bibitem[Bacon(2018)]{bacon2018temporal}
Bacon, P.-L.
\newblock \emph{Temporal Representation Learning}.
\newblock PhD thesis, McGill University Libraries, 2018.

\bibitem[Bacon et~al.(2017)Bacon, Harb, and Precup]{bacon2017option}
Bacon, P.-L., Harb, J., and Precup, D.
\newblock The option-critic architecture.
\newblock In \emph{Thirty-First AAAI Conference on Artificial Intelligence}, 2017.

\bibitem[Bertsekas \& Tsitsiklis(1996)Bertsekas and Tsitsiklis]{bertsekas1996neuro}
Bertsekas, D. and Tsitsiklis, J.~N.
\newblock \emph{Neuro-dynamic programming}.
\newblock Athena Scientific, 1996.

\bibitem[Brockett(1993)]{brockett1993hybrid}
Brockett, R.~W.
\newblock Hybrid models for motion control systems.
\newblock In \emph{Essays on Control: Perspectives in the Theory and its Applications}, pp.\  29--53. Springer, 1993.

\bibitem[Brockman et~al.(2016)Brockman, Cheung, Pettersson, Schneider, Schulman, Tang, and Zaremba]{brockman2016openai}
Brockman, G., Cheung, V., Pettersson, L., Schneider, J., Schulman, J., Tang, J., and Zaremba, W.
\newblock Openai gym.
\newblock \emph{arXiv preprint arXiv:1606.01540}, 2016.

\bibitem[Co-Reyes et~al.(2018)Co-Reyes, Liu, Gupta, Eysenbach, Abbeel, and Levine]{co2018self}
Co-Reyes, J., Liu, Y., Gupta, A., Eysenbach, B., Abbeel, P., and Levine, S.
\newblock Self-consistent trajectory autoencoder: Hierarchical reinforcement learning with trajectory embeddings.
\newblock In \emph{International conference on machine learning}, pp.\  1009--1018. PMLR, 2018.

\bibitem[Cobbe et~al.(2021)Cobbe, Kosaraju, Bavarian, Chen, Jun, Kaiser, Plappert, Tworek, Hilton, Nakano, et~al.]{cobbe2021training}
Cobbe, K., Kosaraju, V., Bavarian, M., Chen, M., Jun, H., Kaiser, L., Plappert, M., Tworek, J., Hilton, J., Nakano, R., et~al.
\newblock Training verifiers to solve math word problems.
\newblock \emph{arXiv preprint arXiv:2110.14168}, 2021.

\bibitem[Colombetti et~al.(1996)Colombetti, Dorigo, and Borghi]{colombetti1996behavior}
Colombetti, M., Dorigo, M., and Borghi, G.
\newblock Behavior analysis and training-a methodology for behavior engineering.
\newblock \emph{IEEE Transactions on Systems, Man, and Cybernetics, Part B (Cybernetics)}, 26\penalty0 (3):\penalty0 365--380, 1996.

\bibitem[Daniel et~al.(2016)Daniel, Van~Hoof, Peters, and Neumann]{daniel2016probabilistic}
Daniel, C., Van~Hoof, H., Peters, J., and Neumann, G.
\newblock Probabilistic inference for determining options in reinforcement learning.
\newblock \emph{Machine Learning}, 104\penalty0 (2-3):\penalty0 337--357, 2016.

\bibitem[Dayan \& Hinton(1993)Dayan and Hinton]{dayan1993feudal}
Dayan, P. and Hinton, G.~E.
\newblock Feudal reinforcement learning.
\newblock \emph{Advances in Neural Information Processing Systems}, pp.\  271--278, 1993.

\bibitem[Deng et~al.(2023)Deng, Prasad, Fernandez, Smolensky, Chaudhary, and Shieber]{deng2023implicit}
Deng, Y., Prasad, K., Fernandez, R., Smolensky, P., Chaudhary, V., and Shieber, S.
\newblock Implicit chain of thought reasoning via knowledge distillation.
\newblock \emph{arXiv preprint arXiv:2311.01460}, 2023.

\bibitem[Deng et~al.(2024)Deng, Choi, and Shieber]{deng2024explicit}
Deng, Y., Choi, Y., and Shieber, S.
\newblock From explicit cot to implicit cot: Learning to internalize cot step by step.
\newblock \emph{arXiv preprint arXiv:2405.14838}, 2024.

\bibitem[Dietterich(2000)]{dietterich2000hierarchical}
Dietterich, T.~G.
\newblock Hierarchical reinforcement learning with the maxq value function decomposition.
\newblock \emph{Journal of Artificial Intelligence Research}, 13:\penalty0 227--303, 2000.

\bibitem[Eysenbach et~al.(2018)Eysenbach, Gupta, Ibarz, and Levine]{eysenbach2018diversity}
Eysenbach, B., Gupta, A., Ibarz, J., and Levine, S.
\newblock Diversity is all you need: Learning skills without a reward function.
\newblock \emph{arXiv preprint arXiv:1802.06070}, 2018.

\bibitem[Fujimoto et~al.(2018)Fujimoto, Van~Hoof, and Meger]{fujimoto2018addressing}
Fujimoto, S., Van~Hoof, H., and Meger, D.
\newblock Addressing function approximation error in actor-critic methods.
\newblock \emph{arXiv preprint arXiv:1802.09477}, 2018.

\bibitem[Gao et~al.(2023)Gao, Madaan, Zhou, Alon, Liu, Yang, Callan, and Neubig]{gao2023pal}
Gao, L., Madaan, A., Zhou, S., Alon, U., Liu, P., Yang, Y., Callan, J., and Neubig, G.
\newblock Pal: Program-aided language models.
\newblock In \emph{International Conference on Machine Learning}, pp.\  10764--10799. PMLR, 2023.

\bibitem[Geiping et~al.(2025)Geiping, McLeish, Jain, Kirchenbauer, Singh, Bartoldson, Kailkhura, Bhatele, and Goldstein]{geiping2025scaling}
Geiping, J., McLeish, S., Jain, N., Kirchenbauer, J., Singh, S., Bartoldson, B.~R., Kailkhura, B., Bhatele, A., and Goldstein, T.
\newblock Scaling up test-time compute with latent reasoning: A recurrent depth approach.
\newblock \emph{arXiv preprint arXiv:2502.05171}, 2025.

\bibitem[Goyal et~al.(2019)Goyal, Islam, Strouse, Ahmed, Botvinick, Larochelle, Bengio, and Levine]{goyal2019infobot}
Goyal, A., Islam, R., Strouse, D., Ahmed, Z., Botvinick, M., Larochelle, H., Bengio, Y., and Levine, S.
\newblock Infobot: Transfer and exploration via the information bottleneck.
\newblock \emph{arXiv preprint arXiv:1901.10902}, 2019.

\bibitem[Goyal et~al.(2023)Goyal, Ji, Rawat, Menon, Kumar, and Nagarajan]{goyal2023think}
Goyal, S., Ji, Z., Rawat, A.~S., Menon, A.~K., Kumar, S., and Nagarajan, V.
\newblock Think before you speak: Training language models with pause tokens.
\newblock \emph{arXiv preprint arXiv:2310.02226}, 2023.

\bibitem[Grattafiori et~al.(2024)Grattafiori, Dubey, Jauhri, Pandey, Kadian, Al-Dahle, Letman, Mathur, Schelten, Vaughan, et~al.]{grattafiori2024llama}
Grattafiori, A., Dubey, A., Jauhri, A., Pandey, A., Kadian, A., Al-Dahle, A., Letman, A., Mathur, A., Schelten, A., Vaughan, A., et~al.
\newblock The llama 3 herd of models.
\newblock \emph{arXiv preprint arXiv:2407.21783}, 2024.

\bibitem[Guo et~al.(2017)Guo, Thomas, and Brunskill]{guo2017using}
Guo, Z., Thomas, P.~S., and Brunskill, E.
\newblock Using options and covariance testing for long horizon off-policy policy evaluation.
\newblock In \emph{Advances in Neural Information Processing Systems}, pp.\  2492--2501, 2017.

\bibitem[Gupta et~al.(2019)Gupta, Kumar, Lynch, Levine, and Hausman]{gupta2019relay}
Gupta, A., Kumar, V., Lynch, C., Levine, S., and Hausman, K.
\newblock Relay policy learning: Solving long-horizon tasks via imitation and reinforcement learning.
\newblock \emph{arXiv preprint arXiv:1910.11956}, 2019.

\bibitem[Haarnoja et~al.(2017)Haarnoja, Tang, Abbeel, and Levine]{haarnoja2017reinforcement}
Haarnoja, T., Tang, H., Abbeel, P., and Levine, S.
\newblock Reinforcement learning with deep energy-based policies.
\newblock In \emph{International Conference on Machine Learning}, pp.\  1352--1361. PMLR, 2017.

\bibitem[Haarnoja et~al.(2018)Haarnoja, Zhou, Abbeel, and Levine]{haarnoja2018soft}
Haarnoja, T., Zhou, A., Abbeel, P., and Levine, S.
\newblock Soft actor-critic: Off-policy maximum entropy deep reinforcement learning with a stochastic actor.
\newblock \emph{arXiv preprint arXiv:1801.01290}, 2018.

\bibitem[Hao et~al.(2022)Hao, Sukhbaatar, Su, Li, Hu, Weston, and Tian]{coconut}
Hao, S., Sukhbaatar, S., Su, D., Li, X., Hu, Z., Weston, J., and Tian, Y.
\newblock Training large language models to reason in a continuous latent space, 2024.
\newblock \emph{URL https://arxiv. org/abs/2412.06769}, 2022.

\bibitem[Harb et~al.(2018)Harb, Bacon, Klissarov, and Precup]{harb2018waiting}
Harb, J., Bacon, P.-L., Klissarov, M., and Precup, D.
\newblock When waiting is not an option: Learning options with a deliberation cost.
\newblock In \emph{Thirty-Second AAAI Conference on Artificial Intelligence}, 2018.

\bibitem[Harutyunyan et~al.(2019)Harutyunyan, Dabney, Borsa, Heess, Munos, and Precup]{harutyunyan2019termination}
Harutyunyan, A., Dabney, W., Borsa, D., Heess, N., Munos, R., and Precup, D.
\newblock The termination critic.
\newblock \emph{arXiv preprint arXiv:1902.09996}, 2019.

\bibitem[Hausman et~al.(2018)Hausman, Springenberg, Wang, Heess, and Riedmiller]{hausman2018learning}
Hausman, K., Springenberg, J.~T., Wang, Z., Heess, N., and Riedmiller, M.
\newblock Learning an embedding space for transferable robot skills.
\newblock In \emph{International Conference on Learning Representations}, 2018.

\bibitem[Kamat \& Precup(2020)Kamat and Precup]{kamat2020diversity}
Kamat, A. and Precup, D.
\newblock Diversity-enriched option-critic.
\newblock \emph{arXiv}, 2020.

\bibitem[Khetarpal \& Precup(2019)Khetarpal and Precup]{khetarpal2019learning}
Khetarpal, K. and Precup, D.
\newblock Learning options with interest functions.
\newblock In \emph{Proceedings of the 32nd AAAI Conference on Artificial Intelligence}, pp.\  1--2, 2019.

\bibitem[Khetarpal et~al.(2020)Khetarpal, Klissarov, Chevalier-Boisvert, Bacon, and Precup]{khetarpal2020options}
Khetarpal, K., Klissarov, M., Chevalier-Boisvert, M., Bacon, P.-L., and Precup, D.
\newblock Options of interest: Temporal abstraction with interest functions.
\newblock In \emph{Proceedings of the AAAI Conference on Artificial Intelligence}, volume~34, pp.\  4,444--4,451, 2020.

\bibitem[Klissarov \& Precup(2021)Klissarov and Precup]{klissarov2021flexible}
Klissarov, M. and Precup, D.
\newblock Flexible option learning.
\newblock In Ranzato, M., Beygelzimer, A., Dauphin, Y., Liang, P., and Vaughan, J.~W. (eds.), \emph{Advances in Neural Information Processing Systems}, volume~34, pp.\  4632--4646. Curran Associates, 2021.

\bibitem[Klissarov et~al.(2017)Klissarov, Bacon, Harb, and Precup]{klissarov2017learnings}
Klissarov, M., Bacon, P.-L., Harb, J., and Precup, D.
\newblock Learnings options end-to-end for continuous action tasks.
\newblock \emph{arXiv preprint arXiv:1712.00004}, 2017.

\bibitem[Koller \& Friedman(2009)Koller and Friedman]{koller2009probabilistic}
Koller, D. and Friedman, N.
\newblock \emph{Probabilistic graphical models: principles and techniques}.
\newblock MIT press, 2009.

\bibitem[Kolobov et~al.(2012)Kolobov, Weld, et~al.]{kolobov2012discovering}
Kolobov, A., Weld, D.~S., et~al.
\newblock Discovering hidden structure in factored mdps.
\newblock \emph{Artificial Intelligence}, 189:\penalty0 19--47, 2012.

\bibitem[Konidaris \& Barto(2009)Konidaris and Barto]{konidaris2009skill}
Konidaris, G. and Barto, A.~G.
\newblock Skill discovery in continuous reinforcement learning domains using skill chaining.
\newblock In \emph{Advances in neural information processing systems}, pp.\  1015--1023, 2009.

\bibitem[Levine(2018)]{levine2018reinforcement}
Levine, S.
\newblock Reinforcement learning and control as probabilistic inference: Tutorial and review.
\newblock \emph{arXiv preprint arXiv:1805.00909}, 2018.

\bibitem[Levy \& Shimkin(2011)Levy and Shimkin]{levy2011unified}
Levy, K.~Y. and Shimkin, N.
\newblock Unified inter and intra options learning using policy gradient methods.
\newblock In \emph{European Workshop on Reinforcement Learning}, pp.\  153--164. Springer, 2011.

\bibitem[Li et~al.(2020{\natexlab{a}})Li, Ma, Zhang, Yang, Xia, and Zhao]{li2020soac}
Li, C., Ma, X., Zhang, C., Yang, J., Xia, L., and Zhao, Q.
\newblock Soac: The soft option actor-critic architecture.
\newblock \emph{arXiv preprint arXiv:2006.14363}, 2020{\natexlab{a}}.

\bibitem[Li et~al.(2020{\natexlab{b}})Li, Song, and Tao]{li2020skill}
Li, C., Song, D., and Tao, D.
\newblock The skill-action architecture: Learning abstract action embeddings for reinforcement learning.
\newblock \emph{OpenReview:id=PU35uLgRZkk}, 2020{\natexlab{b}}.

\bibitem[Li et~al.(2020{\natexlab{c}})Li, Song, and Tao]{li2021skill}
Li, C., Song, D., and Tao, D.
\newblock The skill-action architecture: Learning abstract action embeddings for reinforcement learning.
\newblock \emph{OpenReview:id=PU35uLgRZkk}, 2020{\natexlab{c}}.

\bibitem[Li et~al.(2022)Li, Song, and Tao]{li2022hit}
Li, C., Song, D., and Tao, D.
\newblock Hit-mdp: learning the smdp option framework on mdps with hidden temporal embeddings.
\newblock In \emph{The Eleventh International Conference on Learning Representations}, 2022.

\bibitem[Li et~al.(2023)Li, Song, and Tao]{li2023hit}
Li, C., Song, D., and Tao, D.
\newblock Hit-mdp: learning the smdp option framework on mdps with hidden temporal embeddings.
\newblock In \emph{The Eleventh International Conference on Learning Representations}, 2023.

\bibitem[Mnih et~al.(2015)Mnih, Kavukcuoglu, Silver, Rusu, Veness, Bellemare, Graves, Riedmiller, Fidjeland, Ostrovski, et~al.]{mnih2015human}
Mnih, V., Kavukcuoglu, K., Silver, D., Rusu, A.~A., Veness, J., Bellemare, M.~G., Graves, A., Riedmiller, M., Fidjeland, A.~K., Ostrovski, G., et~al.
\newblock Human-level control through deep reinforcement learning.
\newblock \emph{Nature}, 518\penalty0 (7540):\penalty0 529--533, 2015.

\bibitem[Osa et~al.(2019)Osa, Tangkaratt, and Sugiyama]{osa2019hierarchical}
Osa, T., Tangkaratt, V., and Sugiyama, M.
\newblock Hierarchical reinforcement learning via advantage-weighted information maximization.
\newblock \emph{arXiv preprint arXiv:1901.01365}, 2019.

\bibitem[Panangaden et~al.(2024)Panangaden, Rezaei-Shoshtari, Zhao, Meger, and Precup]{panangaden2024policy}
Panangaden, P., Rezaei-Shoshtari, S., Zhao, R., Meger, D., and Precup, D.
\newblock Policy gradient methods in the presence of symmetries and state abstractions.
\newblock \emph{Journal of Machine Learning Research}, 25\penalty0 (71):\penalty0 1--57, 2024.

\bibitem[Patel et~al.(2021)Patel, Bhattamishra, and Goyal]{patel-etal-2021-nlp}
Patel, A., Bhattamishra, S., and Goyal, N.
\newblock Are {NLP} models really able to solve simple math word problems?
\newblock In Toutanova, K., Rumshisky, A., Zettlemoyer, L., Hakkani-Tur, D., Beltagy, I., Bethard, S., Cotterell, R., Chakraborty, T., and Zhou, Y. (eds.), \emph{Proceedings of the 2021 Conference of the North American Chapter of the Association for Computational Linguistics: Human Language Technologies}, pp.\  2080--2094, Online, June 2021. Association for Computational Linguistics.
\newblock \doi{10.18653/v1/2021.naacl-main.168}.
\newblock URL \url{https://aclanthology.org/2021.naacl-main.168/}.

\bibitem[Pertsch et~al.(2020)Pertsch, Rybkin, Ebert, Finn, Jayaraman, and Levine]{pertsch2020long}
Pertsch, K., Rybkin, O., Ebert, F., Finn, C., Jayaraman, D., and Levine, S.
\newblock Long-horizon visual planning with goal-conditioned hierarchical predictors.
\newblock \emph{NeurIPS}, 2020.

\bibitem[Precup(2000)]{precup2000temporal}
Precup, D.
\newblock \emph{Temporal abstraction in reinforcement learning}.
\newblock University of Massachusetts Amherst, 2000.

\bibitem[Roy \& Roth(2015)Roy and Roth]{roy-roth-2015-solving}
Roy, S. and Roth, D.
\newblock Solving general arithmetic word problems.
\newblock In M{\`a}rquez, L., Callison-Burch, C., and Su, J. (eds.), \emph{Proceedings of the 2015 Conference on Empirical Methods in Natural Language Processing}, pp.\  1743--1752, Lisbon, Portugal, September 2015. Association for Computational Linguistics.
\newblock \doi{10.18653/v1/D15-1202}.
\newblock URL \url{https://aclanthology.org/D15-1202/}.

\bibitem[Ruan et~al.(2025)Ruan, Band, Maddison, and Hashimoto]{ruan2025reasoning}
Ruan, Y., Band, N., Maddison, C.~J., and Hashimoto, T.
\newblock Reasoning to learn from latent thoughts.
\newblock \emph{arXiv preprint arXiv:2503.18866}, 2025.

\bibitem[Schulman et~al.(2017{\natexlab{a}})Schulman, Chen, and Abbeel]{schulman2017equivalence}
Schulman, J., Chen, X., and Abbeel, P.
\newblock Equivalence between policy gradients and soft q-learning.
\newblock \emph{arXiv preprint arXiv:1704.06440}, 2017{\natexlab{a}}.

\bibitem[Schulman et~al.(2017{\natexlab{b}})Schulman, Wolski, Dhariwal, Radford, and Klimov]{schulman2017proximal}
Schulman, J., Wolski, F., Dhariwal, P., Radford, A., and Klimov, O.
\newblock Proximal policy optimization algorithms.
\newblock \emph{arXiv preprint arXiv:1707.06347}, 2017{\natexlab{b}}.

\bibitem[Sharma et~al.(2019)Sharma, Gu, Levine, Kumar, and Hausman]{sharma2019dynamics}
Sharma, A., Gu, S., Levine, S., Kumar, V., and Hausman, K.
\newblock Dynamics-aware unsupervised discovery of skills.
\newblock \emph{arXiv preprint arXiv:1907.01657}, 2019.

\bibitem[Shen et~al.(2025{\natexlab{a}})Shen, Yan, Zhang, Hu, Du, and He]{codi}
Shen, Z., Yan, H., Zhang, L., Hu, Z., Du, Y., and He, Y.
\newblock Codi: Compressing chain-of-thought into continuous space via self-distillation.
\newblock \emph{arXiv preprint arXiv:2502.21074}, 2025{\natexlab{a}}.

\bibitem[Shen et~al.(2025{\natexlab{b}})Shen, Yan, Zhang, Hu, Du, and He]{shen2025codi}
Shen, Z., Yan, H., Zhang, L., Hu, Z., Du, Y., and He, Y.
\newblock Codi: Compressing chain-of-thought into continuous space via self-distillation.
\newblock \emph{arXiv preprint arXiv:2502.21074}, 2025{\natexlab{b}}.

\bibitem[Shiarlis et~al.(2018)Shiarlis, Wulfmeier, Salter, Whiteson, and Posner]{shiarlis2018taco}
Shiarlis, K., Wulfmeier, M., Salter, S., Whiteson, S., and Posner, I.
\newblock Taco: Learning task decomposition via temporal alignment for control.
\newblock In \emph{International Conference on Machine Learning}, pp.\  4654--4663. PMLR, 2018.

\bibitem[Silver et~al.(2016)Silver, Huang, Maddison, Guez, Sifre, Van Den~Driessche, Schrittwieser, Antonoglou, Panneershelvam, Lanctot, et~al.]{silver2016mastering}
Silver, D., Huang, A., Maddison, C.~J., Guez, A., Sifre, L., Van Den~Driessche, G., Schrittwieser, J., Antonoglou, I., Panneershelvam, V., Lanctot, M., et~al.
\newblock Mastering the game of go with deep neural networks and tree search.
\newblock \emph{Nature}, 529\penalty0 (7587):\penalty0 484--489, 2016.

\bibitem[Smith et~al.(2018)Smith, Hoof, and Pineau]{smith2018inference}
Smith, M., Hoof, H., and Pineau, J.
\newblock An inference-based policy gradient method for learning options.
\newblock In \emph{International Conference on Machine Learning}, pp.\  4,703--4,712, 2018.

\bibitem[Sutton \& Barto(2018)Sutton and Barto]{sutton2018reinforcement}
Sutton, R.~S. and Barto, A.~G.
\newblock \emph{Reinforcement learning: An introduction}.
\newblock MIT press, 2018.

\bibitem[Sutton et~al.(1999)Sutton, Precup, and Singh]{sutton1999between}
Sutton, R.~S., Precup, D., and Singh, S.
\newblock Between mdps and semi-mdps: A framework for temporal abstraction in reinforcement learning.
\newblock \emph{Artificial Intelligence}, 112\penalty0 (1-2):\penalty0 181--211, 1999.

\bibitem[Talmor et~al.(2018)Talmor, Herzig, Lourie, and Berant]{talmor2018commonsenseqa}
Talmor, A., Herzig, J., Lourie, N., and Berant, J.
\newblock Commonsenseqa: A question answering challenge targeting commonsense knowledge.
\newblock \emph{arXiv preprint arXiv:1811.00937}, 2018.

\bibitem[Todorov(2006)]{todorov2006linearly}
Todorov, E.
\newblock Linearly-solvable markov decision problems.
\newblock \emph{Advances in neural information processing systems}, 19, 2006.

\bibitem[Wei et~al.(2022)Wei, Wang, Schuurmans, Bosma, Xia, Chi, Le, Zhou, et~al.]{wei2022chain}
Wei, J., Wang, X., Schuurmans, D., Bosma, M., Xia, F., Chi, E., Le, Q.~V., Zhou, D., et~al.
\newblock Chain-of-thought prompting elicits reasoning in large language models.
\newblock \emph{Advances in neural information processing systems}, 35:\penalty0 24824--24837, 2022.

\bibitem[Wulfmeier et~al.(2020)Wulfmeier, Rao, Hafner, Lampe, Abdolmaleki, Hertweck, Neunert, Tirumala, Siegel, Heess, et~al.]{wulfmeier2020data}
Wulfmeier, M., Rao, D., Hafner, R., Lampe, T., Abdolmaleki, A., Hertweck, T., Neunert, M., Tirumala, D., Siegel, N., Heess, N., et~al.
\newblock Data-efficient hindsight off-policy option learning.
\newblock \emph{arXiv preprint arXiv:2007.15588}, 2020.

\bibitem[Zhang et~al.(2022)Zhang, Courville, Bengio, Zheng, Zhang, and Chen]{zhang2022latent}
Zhang, D., Courville, A., Bengio, Y., Zheng, Q., Zhang, A., and Chen, R.~T.
\newblock Latent state marginalization as a low-cost approach for improving exploration.
\newblock \emph{arXiv preprint arXiv:2210.00999}, 2022.

\bibitem[Zhang \& Whiteson(2019)Zhang and Whiteson]{zhang2019dac}
Zhang, S. and Whiteson, S.
\newblock {DAC}: The double actor-critic architecture for learning options.
\newblock In \emph{Advances in Neural Information Processing Systems}, pp.\  2,012--2,022, 2019.

\bibitem[Ziebart et~al.(2010)Ziebart, Bagnell, and Dey]{ziebart2010modeling}
Ziebart, B.~D., Bagnell, J.~A., and Dey, A.~K.
\newblock Modeling interaction via the principle of maximum causal entropy.
\newblock In \emph{ICML}, 2010.

\end{thebibliography}

\appendix
\section{Proofs}

\subsection{\autoref{thm:var_soft_mdp}}
\label{app:thm_var_soft_mdp}

\textbf{\autoref{thm:var_soft_mdp}} (Convergence Theorem for
Structured Variational Policy Iteration). \textit{ Let \(\tau\)
  be the latent variable and \(\gE\) be the observed variable.
  Define the variational distribution \(q(\tau)\) and the
  log-likelihood \(\log P(\gE)\). Let \( M: q^{[k]} \rightarrow
  q^{[k+1]} \) represent the mapping defined by the EM steps
  inference update, so that \( q^{[k+1]} = M(q^{[k]}) \). The
  likelihood function increases at each iteration of the
  variational inference algorithm until the conditions for
  equality are satisfied and a fixed point of the iteration is
  reached:
  \[
    \log P(\gE \mid q^{[k+1]}) \geq \log P(\gE \mid q^{[k]}), \text{ with equality if and only if}
  \]
  \[
    \mathcal{L}(q^{[k+1]}, P) = \mathcal{L}(q^{[k]}, P)
  \]
  and
  \[
    \text{D}_{\text{KL}}(q^{[k+1]}(\tau) \parallel P(\tau \mid
    \gE)) = \text{D}_{\text{KL}}(q^{[k]}(\tau) \parallel P(\tau
    \mid \gE)).
  \]
}

\begin{proof}

  Let \(\tau\) be the latent variable and \(\gE\) be the observed
  variable. Define the evidence lower bound (ELBO) as
  \(\mathcal{L}(q, P)\) and the Kullback-Leibler divergence as
  \(\text{D}_{\text{KL}}(q \parallel P)\), where \(q(\tau)\)
  approximates the posterior distribution and \(P(\gE \mid
  \tau)\) is the likelihood.

  The log-likelihood function \( \log P(\gE) \) can be decomposed
  as:
\[
  \log P(\gE) = \mathcal{L}(q, P) + \text{D}_{\text{KL}}(q(\tau)
  \parallel P(\tau \mid \gE)),
\]
where
\[
\mathcal{L}(q, P) = \mathbb{E}_{q(\tau)} \left[ \log P(\gE, \tau) - \log q(\tau) \right]
\]
and
\[
  \text{D}_{\text{KL}}(q(\tau) \parallel P(\tau \mid \gE)) =
  \mathbb{E}_{q(\tau)} \left[ \log \frac{q(\tau)}{P(\tau \mid
      \gE)} \right].
\]

Let \( M: q^{[k]} \rightarrow q^{[k+1]} \) represent the mapping
defined by the variational inference update, so that \( q^{[k+1]}
= M(q^{[k]}) \). If \(q^*\) is a variational distribution that
maximizes the ELBO, so that \(\log P(\gE \mid q^*) \geq \log
P(\gE \mid q)\) for all \(q\), then \(\log P(\gE \mid M(q^*)) =
\log P(\gE \mid q^*)\). In other words, the maximizing
distributions are fixed points of the variational inference
algorithm. Since the likelihood function is bounded (for
distributions of practical interest), the sequence of variational
distributions \( q^{[0]}, q^{[1]}, \ldots, q^{[k]} \) yields a
bounded nondecreasing sequence \(\log P(\gE \mid q^{[0]}) \leq
\log P(\gE \mid q^{[1]}) \leq \cdots \leq \log P(\gE \mid
q^{[k]}) \leq \log P(\gE \mid q^{[k]})\) which must converge as
\( k \rightarrow \infty \).

\end{proof}

\subsection{\autoref{thm:soft_mopi}}
\label{app:thm_soft_mopi}

\textbf{\autoref{thm:soft_mopi}} (Soft Option Policy Iteration
Theorem). \textit{ Repeated optimizing $\gL$ and $\KL$ defined in
  \Eqref{eq:lme_hitmdp} from any \(\pi^A_0,\pi^O_0 \in \Pi\)
  converges to optimal policies \(\pi^{A*}, \pi^{O*}\) such that
  $Q^{soft*}_O[\rvs_t,\rvo_t] \geq Q^{soft}_O[\rvs_t,\rvo_t]$ and
  $Q^{soft*}_A[ \rvs_t,\rvo_t,\rva_t]\geq Q^{soft}_A[
  \rvs_t,\rvo_t,\rva_t]$, for all \(\pi^A_0,\pi^O_0 \in \Pi\) and
  \((\rvs_t, \rva_t, \rvo_t) \in \mathcal{S} \times \mathcal{A}
  \times \gO \), assuming $|\gS|<\infty, \; |\gO|<\infty, \;
  |\gA|<\infty$. }

\begin{proof}
  Define the entropy augmented reward as
  $r^{soft}(\rvs_t,\rva_t) = r(\rvs_t,\rva_t)+\gH[\pi^A]$ and
  \(f^{soft}(\rvo_t,\rvs_t,\rva_t,\rvo_{t-1}) =
  f(\rvo_t,\rvs_t,\rva_t,\rvo_{t-1}) + \gH[\pi^O]\) and rewrite
  Bellman Backup functions as,
  \begin{align*}
    Q_O[\rvs_t,\rvo_t]&= f^{soft}(\cdot) + \E_{\rva_t \sim \pi^A}\left[Q_A[\rvs_t,\rvo_t,\rva_t]\right],\\
    Q_A[ \rvs_t,\rvo_t,\rva_t]&= r^{soft}(s,a) + \E_{\rvs_{t+1}\sim P(\rvs_{t+1}|\rvs_t,\rva_t)}\left[\E_{\rvo_{t+1}\sim\pi^O}\left[ Q_O[\rvs_{t+1},\rvo_{t+1}]\right]\right]
  \end{align*}

  We start with proving the convergence of soft option policy
  evaluation. As with the standard Q-function and value function,
  we can relate the Q-function at a future state via a \emph{
    Bellman Operator} $\mathcal{T}^{soft}$. The option-action
  value function satisfies the Bellman Operator
  $\mathcal{T}^{soft}$
    \begin{align*}
      \label{eq:sa_q_a}
      \mathcal{T}^{soft}Q_A[ \rvs_t,\rvo_t,\rva_t]&=\E[G_t| \rvs_t,\rvo_t,\rva_t]\nonumber\\
                                                   &= r^{soft}(s,a) + \gamma\sum_{\rvs_{t+1}}P(\rvs_{t+1}|\rvs_t,\rva_t)Q_O[\rvs_{t+1},\rvo_t],
    \end{align*}

    As with the standard convergence results for policy
    evaluation~\cite{sutton2018reinforcement}, by the definition
    of $\mathcal{T}^{soft}$ (\Eqref{eq:sa_q_a}) the option-action
    value function $Q_A^{\pi_A}$ is a fixed point.

    To prove the $\mathcal{T}^{soft}$ is a contraction, define a norm
    on $V$-values functions $V$ and $U$
    \begin{align}
      \|V - U \|_{\infty } \triangleq \max_{\bar{s} \in \bar{S}}|V(\bar{s}) - U(\bar{s}) |.
    \end{align}
    where $\bar{s}=\{s,o\}$.

    By recurssively apply the Hidden Temporal Bellman Operator
    $\mathcal{T}^{soft}$, we have:

    \begin{align}
      \label{eq:app_v_bell}
      Q_O[\rvs_t,\rvo_{t-1}]&=\E[G_t|\rvs_t,\rvo_{t-1}]= \sum_{\rvo_t}P(\rvo_t|\rvs_t,\rvo_{t-1})Q_O[ \rvs_t,\rvo_t]\nonumber\\
                                      &=\sum_{\rvo_t}P(\rvo_t|\rvs_t,\rvo_{t-1})\sum_{\rva_t}P(\rva_t|\rvs_t,\rvo_{t})\bigg[r(s,a) + \gamma\sum_{\rvs_{t+1}}P(\rvs_{t+1}|\rvs_t,\rva_t)Q_O[\rvs_{t+1},\rvo_t]\bigg]\nonumber\\
                                      &=r(s,a) + \gamma\sum_{\rvo_t}P(\rvo_t|\rvs_t,\rvo_{t-1})\sum_{\rva_t}P(\rva_t|\rvs_t,\rvo_{t})\sum_{\rvs_{t+1}}P(\rvs_{t+1}|\rvs_t,\rva_t)Q_O[\rvs_{t+1},\rvo_t]\nonumber\\
                                      &=r(s,a) + \gamma\sum_{\rvo_t,\rvs_{t+1}}P(\rvs_{t+1},\rvo_t|\rvs_t,\rvo_{t-1})Q_O[\rvs_{t+1},\rvo_t]\nonumber\\
                                      &=r(s,a) + \gamma E_{\rvs_{t+1},\rvo_t}\bigg[Q_O[\rvs_{t+1},\rvo_t]\bigg]
    \end{align}

    Therefore, by applying \Eqref{eq:app_v_bell} to $V$ and $U$
    we have:
    \begin{flalign}
  \|&T^{\pi}V-T^{\pi} U\|_{\infty}\nonumber\\
  &= \max_{\bar{s} \in \bar{S}} \bigg| \gamma
  E_{\rvs_{t+1},\rvo_t}\bigg[Q_O[\rvs_{t+1},\rvo_t]\bigg]
  - \gamma
  E_{\rvs_{t+1},\rvo_t}\bigg[U[\rvs_{t+1},\rvo_t]\bigg]\bigg|
  \nonumber\\
  & =\gamma \max_{\bar{s} \in \bar{S}}
  E_{\rvs_{t+1},\rvo_t}\bigg[\bigg|Q_O[\rvs_{t+1},\rvo_t]-U[\rvs_{t+1},\rvo_t]\bigg|\bigg]\nonumber\\
  & \leq \gamma \max_{\bar{s} \in \bar{S}}
  E_{\rvs_{t+1},\rvo_t}\bigg[\gamma \max_{\bar{s} \in \bar{S}}\bigg|Q_O[\rvs_{t+1},\rvo_t]-U[\rvs_{t+1},\rvo_t]\bigg|\bigg]\nonumber\\
  & \leq \gamma \max_{\bar{s} \in \bar{S}}|V[\bar{s}]-U[\bar{s}]|
  \nonumber\\
  &= \gamma \| V - U \|_{_{\infty}}
\end{flalign}
Therefore, $\mathcal{T}^{soft}$ is a contraction. By the fixed
point theorem, assuming that throughout our computation the
$Q_A[\cdot,\cdot]$ and $Q_O[\cdot]$ are bounded and $\sA<\infty$,
the sequence $Q_A^k$ defined by
$Q_A^{k+1}=\mathcal{T}^{soft}Q_A^k$ will converge to the
option-action value function $Q_A^{\pi_A}$ as $k\rightarrow
\infty$.

The convergence results of and the Soft Option Policy Improvement
Theorem then follows conventional Soft Policy Improvement
Theorem~\autoref{thm:var_soft_mdp}. Consequently, the Soft Option
Policy Iteration Theorem follows directly from these results.

\end{proof}

\subsection{Derivation of \Eqref{eq:lme_hitmdp}}
\label{app:deriv_lme_hitmdp}

\begin{align*}
  \gL(q(\tau),P(\tau,\gE^A_{1:T},\gE^O_{1:T})) &= \E_{q(\tau)}[\log P(\tau,\gE^A_{1:T},\gE^O_{1:T}) - \log q(\tau)]\\
                                               &= \E_{q(\tau)}[\log P(\tau|\gE^A_{1:T},\gE^O_{1:T}) + \log P(\gE^A_{1:T},\gE^O_{1:T}) - \log q(\tau)]\\
                                               &= \E_{q(\tau)}[\log P(\tau|\gE^A_{1:T},\gE^O_{1:T})  - \log q(\tau)] + \E_{q(\tau)}\log P(\gE^A_{1:T},\gE^O_{1:T})\\
                                               &= \E_{q(\tau)}[\frac{\log P(\tau|\gE^A_{1:T},\gE^O_{1:T})}{\log q(\tau)}] + \log P(\gE^A_{1:T},\gE^O_{1:T})\\
                                               &= -\KL(\log q(\tau)\parallel\log P(\tau|\gE^A_{1:T},\gE^O_{1:T})) + \log P(\gE^A_{1:T},\gE^O_{1:T})
\end{align*}

\subsection{\autoref{thm:var_hitmdp}}
\label{app:thm_var_hitmdp}

\textbf{\autoref{thm:var_hitmdp}} (Convergence Theorem for
Variational Markovian Option Policy Iteration). \textit{
  Let \(\tau\) be the latent variable and \(\gE^A, \gE^O\) be the
  ground-truth optimality variables. Define the variational
  distribution \(q(\tau)\) and the true log-likelihood of
  optimality \(\log P(\gE^A, \gE^O)\). iterates according to the
  update rule $q^{k+1}=\argmax_q
  \gL(q(\tau),P(\tau,\gE^A_{1:T},\gE^O_{1:T}))$ converges to the
  maximum value bounded by the data log-likelihood.
}

\begin{proof}
  The objective is to maximize the ELBO with respect to the
  policy \( q \). Formally, this can be written as:
  \[
  q^{k+1} = \arg \max_q \mathcal{L}(q, P).
  \]

  Suppose we $q$ is a neural network function approximator,
  assuming the continuity and differentiability of \( q \) with
  respect to its parameters. Using stochastic gradient descent
  (SGD) to optimize the parameters guarantees that the ELBO
  increases, such that \( \mathcal{L}(q^{k+1}, P) \geq
  \mathcal{L}(q^k, P) \).

  Rearranging \Eqref{eq:lme_hitmdp} we get:
  \begin{align*}
    \KL(q^{k+1}(\tau)||P(\tau|\gE^A_{1:T},\gE^O_{1:T})) & =-L(q^{k+1}(\tau),P(\tau,\gE^A_{1:T},\gE^O_{1:T})) + \log P(\gE^A_{1:T},\gE^O_{1:T}) \\
    &\leq -L(q^k(\tau),P(\tau,\gE^A_{1:T},\gE^O_{1:T})) + \log P(\gE^A_{1:T},\gE^O_{1:T}) \\
    &= \KL(q^{k}(\tau)||P(\tau|\gE^A_{1:T},\gE^O_{1:T}))
  \end{align*}
  Thus, each SGD update not only potentially increases the ELBO
  but also decreases the KL divergence, moving \( q \) closer to
  \( P \). Given the properties of SGD and assuming appropriate
  learning rates and sufficiently expressive neural network
  architectures, the sequence \( \{ q^k \} \) converges to a
  policy \( q^* \) that minimizes the KL divergence to the true
  posterior.
\end{proof}

\section{VMOC Algorithm}
\label{app:algo}

\begin{algorithm}
\caption{VMOC Algorithm}
\begin{algorithmic}[1]
\STATE Initialize parameter vectors $\psi^A$, $\psi^O$, $\theta^O$, $\theta^A$
\FOR{each epoch}
\STATE Collect trajectories $\{\rvo_{t-1}, \rvs_t, \rva_t,\rvo_t\}$ into the replay buffer
    \FOR{each gradient step}
        \STATE Update $Q_{\psi_i^A}^{soft}$: $\psi_i^A \leftarrow
        \psi_i^A - \eta_{Q^A} \nabla
        J_{Q_{\psi_i^A}^{soft}}\;\text{for} \; i\in \{1,2\}$
        \STATE Update $Q_{\psi_i^O}^{soft}$: $\psi_i^O \leftarrow \psi_i^O - \eta_{Q^O} \nabla J_{Q_{\psi_i^O}^{soft}}\;\text{for} \; i\in \{1,2\}$
        \STATE Update $\pi_{\theta^O}^O$: $\theta^O \leftarrow \theta^O - \eta_{\pi^O} \nabla J_{\pi^O}$
        \STATE Update $\pi_{\theta^A}^A$: $\theta^A \leftarrow \theta^A - \eta_{\pi^A} \nabla J_{\pi^A}$
        \STATE Update target networks: $\bar{\psi}^A \leftarrow \sigma \psi^A + (1 - \sigma) \bar{\psi}^A$, $\bar{\psi}^O \leftarrow \sigma \psi^O + (1 - \sigma) \bar{\psi}^O$
        \STATE Update temperature factors: $\alpha^O \leftarrow \alpha^O - \eta_{\alpha^O} \nabla J_{\alpha^O}$, $\alpha^A \leftarrow \alpha^A - \eta_{\alpha^A} \nabla J_{\alpha^A}$
    \ENDFOR
\ENDFOR
\end{algorithmic}
\end{algorithm}

\section{Implementation Details}
\label{sec:append_imp}

\subsection{Hyperparameters}
\label{sec:append_hyperparams}

In this section we summarize our implementation details. For a
fair comparison, all baselines: MOPG~\cite{li2022hit}, DAC+PPO
\cite{zhang2019dac}, AHP+PPO \cite{levy2011unified}, PPOC
\cite{klissarov2017learnings}, OC \cite{bacon2017option} and PPO
\cite{schulman2017proximal} are from DAC's open source Github
repo: \url{https://github.com/ShangtongZhang/DeepRL/tree/DAC}.
Hyper-parameters used in DAC~\cite{zhang2019dac} for all these
baselines are kept unchanged.

\textbf{VMOC Network Architecture:} We use Pytorch to build
neural networks. Specifically, for option embeddings, we use an
embedding matrix $\mW_S\in \sR^{4\times 40}$ which has $4$
options ($4$ rows) and an embedding size of $40$ ($40$ columns).
For layer normalization we use Pytorch's built-in function
LayerNorm
\footnote{https://pytorch.org/docs/stable/generated/torch.nn.LayerNorm.html}.
For Feed Forward Networks (FNN), we use a 2 layer FNN with ReLu
function as activation function with input size of state-size,
hidden size of $[256,256]$, and output size of action-dim
neurons. For Linear layer, we use built-in Linear
function\footnote{https://pytorch.org/docs/stable/generated/torch.nn.Linear.html}
to map FFN's outputs to $4$ dimension. Each dimension acts like a
logit for each skill and is used as density in Categorical
distribution\footnote{https://github.com/pytorch/pytorch/blob/master/torch/distributions/categorical.py}.
For both action policy and critic module, FFNs are of the same
size as the one used in the skill policy.

\textbf{Preprocessing:} States are normalized by a running
estimation of mean and std.

\textbf{Hyperparameters for all on-policy option variants:} For a
fair comparison, we use exactly the same parameters of PPO as DAC
. Specifically:
\begin{itemize}
\item Optimizer: Adam with $\epsilon= 10^{-5}$ and an initial
  learning rate $3 \times 10^{-4}$

\item Discount ratio $\gamma$: $0.99$

\item GAE coefficient: $0.95$

\item Gradient clip by norm: $0.5$

\item Rollout length: $2048$ environment steps

\item Optimization epochs: $10$

\item Optimization batch size: $64$

\item Action probability ratio clip: $0.2$
\end{itemize}

\textbf{Computing Infrastructure:} We conducted our experiments
on an Intel® Core™ i9-9900X CPU @ 3.50GHz with a single thread
and process with PyTorch.

\end{document}